\theoremstyle{plain}
\newtheorem{theorem}{Theorem}
\newtheorem{proposition}[theorem]{Proposition}%
\newtheorem{lemma}[theorem]{Lemma}
\newtheorem{corollary}{Corollary}[theorem]
\newtheorem{definition}{Definition}
\newtheorem{remark}{Remark}
\newcommand{\new}[1]{#1}
\newcommand{\SA}[1]{{\color{red}}}
\newcommand{\LL}[1]{{\color{blue}}}
\newcommand{\ap}[1]{{\color{orange}}}
\newcommand{\ml}[1]{{\color{purple}}}
\newtheorem{problem}{Problem} 
\newcommand{\card}[1]{
\ifthenelse{\equal{#1}{}}{Empty.}{Nonempty.}
}
\newcommand{\z}[1]{z_{#1}}
\newcommand{\prez}[1]{\zeta_{#1}}
\newcommand{\weight}[1]{W_{#1}}
\newcommand{\act}[2]{\phi_{#1}(#2)}
\newcommand{\nn}[1]{f^{w}\left(#1\right)}
\newcommand{\bz}[1]{\bm{z}_{#1}}
\newcommand{\bprez}[1]{\bm{\zeta}_{#1}}
\newcommand{\bprezElem}[2]{\bm{\zeta}^{(#2)}_{#1}}
\newcommand{\bweight}[1]{\bm{W}_{#1}}
\newcommand{\meanWeight}[2]{
\ifthenelse{\equal{#1}{} \AND \equal{#2}{}}{m_w}
{\ifthenelse{\equal{#2}{}}{m_{#1}}
{m_{w,#1,#2}}}}
\newcommand{\varWeight}[2]{
\ifthenelse{\equal{#1}{} \AND \equal{#2}{}}{\Sigma_w}
{\ifthenelse{\equal{#2}{}}{\Sigma_{w,#1}}
{\Sigma_{w,#1, #2}}}}
\newcommand{\meanBias}[2]{
\ifthenelse{\equal{#1}{} \AND \equal{#2}{}}{\mu_b}
{\ifthenelse{\equal{#2}{}}{\mu_{b,#1}}
{\mu_{b,#1,#2}}}}
\newcommand{\varBias}[2]{
\ifthenelse{\equal{#1}{} \AND \equal{#2}{}}{\sigma_b}
{\ifthenelse{\equal{#2}{}}{\sigma_{b,#1}}
{\sigma_{b,#1,#2}}}}
\newcommand{\bnn}[1]{
\ifthenelse{\equal{#1}{}}{f^{\bm{w}}}{f^{\bm{w}}\left(#1\right)}}
\newcommand{\bnnUntil}[2]{f_{#1}^{\bm{w}}(#2)}
\newcommand{\matCoefL}[0]{\check{A}}
\newcommand{\matCoefU}[0]{\hat{A}}
\newcommand{\vecCoefL}[0]{\check{a}}
\newcommand{\vecCoefU}[0]{\hat{a}}
\newcommand{\vecBiasL}[0]{\check{b}}
\newcommand{\biasL}[0]{\check{\beta}}
\newcommand{\vecBiasU}[0]{\hat{b}}
\newcommand{\biasU}[0]{\hat{\beta}}
\newcommand{\E}[2]{\mathbb{E}_{#1}\left[#2\right]}
\newcommand{\Prob}[2]{\mathbb{P}_{#1}\left[#2\right]}
\newcommand{\condE}[3]{\mathbb{E}_{#1}\left[#2\mid#3\right]}
\newcommand{\nncondE}[3]{\Tilde{\mathbb{E}}_{#1}\left[#2\mid#3\right]}
\newcommand{\nDist}[2]{\mathcal{N}\left(#1;\ #2 \right)}
\newcommand{\nProbDens}[3]{\mathcal{N}\left(#1 \mid #2;\ #3 \right)}
\newcommand{\nCDF}[3]{\Phi\left(#1\mid#2;#3 \right)}
\newcommand{\supp}[1]{\text{supp}(#1)}
\newcommand{\rectnDist}[2]{\mathcal{N}_{\mathcal{R}} \left(#1;\ #2 \right)}
\newcommand{\rectnProbDens}[3]{\mathcal{N}_{\mathcal{R}}\left(#1 \mid #2;\ #3 \right)}
\newcommand{\mat}[2]{\left(\begin{array}{#1}#2\end{array}\right)}
\newcommand{\val}[2]{\ifthenelse{\equal{#2}{}}{V_{#1}}{V_{#1}(#2)}}
\newcommand{\valElem}[3]{\ifthenelse{\equal{#3}{}}{V^{(#1)}_{#1}}{V^{(#1)}_{#2}(#3)}}
\newcommand{\valU}[2]{\ifthenelse{\equal{#2}{}}{\hat{V}_{#1}}{\hat{V}_{#1}(#2)}}
\newcommand{\valElemU}[3]{\ifthenelse{\equal{#3}{}}{\hat{V}^{(#1)}_{#1}}{\hat{V}^{(#1)}_{#2}(#3)}}
\newcommand{\valL}[2]{\ifthenelse{\equal{#2}{}}{\check{V}_{#1}}{\check{V}_{#1}(#2)}}
\newcommand{\valElemL}[3]{\ifthenelse{\equal{#3}{}}{\check{V}^{(#1)}_{#1}}{\check{V}^{(#1)}_{#2}(#3)}}
\newcommand{\erf}[1]{\text{erf}\left(#1\right)}
\newcommand{\inverf}[1]{\text{erf}^{-1}\left(#1\right)}
\newcommand{\pospart}[1]{[#1]_{+}}
\newcommand{\negpart}[1]{[#1]_{-}}
\newcommand{\relu}[1]{\text{ReLU}\left(#1\right)}
\newcommand{\softmax}[0]{\text{softmax}}
\DeclareMathOperator*{\argmax}{arg\,max}
\newcommand{\dataset}{{\cal D}}
\newcommand{\Exp}[1]{\exp \left(#1\right)}
\newcommand{\diag}[1]{\text{diag}\left(#1\right)}
\icmltitlerunning{BNN-DP: Robustness Certification of Bayesian Neural Networks via Dynamic Programming}
\begin{document}

\twocolumn[
\icmltitle{BNN-DP: Robustness Certification of Bayesian Neural Networks via Dynamic Programming}



\icmlsetsymbol{equal}{*}

\begin{icmlauthorlist}
\icmlauthor{Steven Adams}{delft}
\icmlauthor{Andrea Patanè}{dublin}
\icmlauthor{Morteza Lahijanian}{colorado}
\icmlauthor{Luca Laurenti}{delft}
\end{icmlauthorlist}

\icmlaffiliation{delft}{Delft Center for Systems and Control, Technical University of Delft, Delft, 2628 CD, The Netherlands}
\icmlaffiliation{dublin}{School of Computer Science and Statistics, Trinity College Dublin, Dublin 2, Ireland}
\icmlaffiliation{colorado}{Departement of Aerospace Engineering Sciences and Computer Science, University of Colorado Boulder, Boulder, CO 80303, USA}

\icmlcorrespondingauthor{Steven Adams}{s.j.l.adams@tudelft.nl}

\icmlkeywords{Bayesian Neural Networks, Adversarial Robustness, Certification}

\vskip 0.3in
]



\printAffiliationsAndNotice{}  

\begin{abstract}
In this paper, we introduce BNN-DP, an efficient algorithmic framework for analysis of adversarial robustness of Bayesian Neural Networks (BNNs).
Given a compact set of input points $T\subset \mathbb{R}^n$, 
BNN-DP computes lower and upper bounds on the BNN's predictions for all the points in $T$. 
The framework is based on an interpretation of BNNs as stochastic dynamical systems, which enables the use of Dynamic Programming (DP) algorithms to bound the prediction range along the layers of the network.  Specifically, the method uses bound propagation techniques and convex relaxations to derive a backward recursion procedure to over-approximate the prediction range of the BNN with piecewise affine functions. 
The algorithm is general and can handle both regression and classification tasks.
On a set of experiments on various regression and classification tasks and BNN architectures, we show that BNN-DP outperforms state-of-the-art methods by up to four orders of magnitude in both tightness of the bounds and computational efficiency.
\end{abstract}

\section{Introduction}

Adversarial attacks (small and often imperceptible perturbations to input points that can trigger incorrect decisions) have raised serious concerns about the robustness of models learned from data \citep{biggio2018wild,goodfellow2014explaining}. 
Bayesian Neural Networks (BNNs), i.e., neural networks with distributions placed over their parameters, have been proposed as a potentially more robust machine learning paradigm compared to their deterministic 
counterpart \citep{carbone2020robustness, mcallister2017concrete}.
While retaining the advantages intrinsic to deep learning (e.g., representation learning), BNNs enable principled evaluation of model uncertainty, which can be used for flagging out-of-distribution samples and robust decision making \cite{kahn17}. 
However, existing methods that formally (i.e., with certified bounds) evaluate the robustness of BNNs \cite{berrada2021make, wicker2020probabilistic, lechner2021infinite} are 
limited to posterior distributions with bounded support, thus not supporting the majority of the algorithms commonly employed to train BNNs \citep{blundell2015weight,zhang2018noisy,osawa2019practical}
and 
lack scalability to BNNs with non-negligible posterior variance estimates. 

In this paper, we present BNN-DP, a novel algorithmic framework that quantifies the adversarial robustness of BNNs with formal (strong) guarantees.  BNN-DP is scalable and supports posterior distributions of unbounded support, as commonly used in BNNs, e.g., Gaussian distributions. We consider both regression and classification settings. For a compact set of input points $T\subset \mathbb{R}^{n_0}$, we study the robustness of the BNN's decision, i.e., argmax of the expectation of the softmax in case of classification and expectation of the output of the BNN for regression, for all the points in $T.$ 
As exact computation of these quantities is infeasible \citep{berrada2021make}, we focus on computing piecewise affine (PWA) upper and lower bounds.
Inspired by \citet{marchi2021training}, we take a unique view of BNNs as stochastic dynamical systems that evolve over the layers of the neural network and show that the computation of the BNN robustness can be formulated as the solution of a Dynamic Program (DP). This allows us to break the computation of adversarial robustness into a set of simpler sub-problems (one for each layer of the BNN).  
Critically, while each of these problems is still possibly non-convex, we show that accurate and efficient PWA relaxations can be derived for each  by relying on tools from Gaussian processes and convex optimizations.




We validate  our framework on several regression and classification tasks, including the Kin8nm, MNIST,  Fashion MNIST, \new{and CIFAR-10} datasets, and a range of BNN architectures. 
For all tasks, the results show that our method outperforms state-of-the-art competitive approaches in both precision and computational time.
For instance, on the Fashion MNIST dataset, our approach achieves an average 93\% improvement in certified lower bound compared to \citet{berrada2021make}, while being around  3 orders of magnitude faster. 
In summary, this paper makes the following main contributions:
\begin{itemize}
    \item we introduce a framework based on stochastic dynamic programming and convex relaxation for the analysis of adversarial robustness of BNNs, 
    \item we implement an efficient algorithmic procedure of our framework for BNNs trained with Gaussian variational inference (VI) in both regression and classification settings,\footnote{
    \new{Our code is available at \url{https://github.com/sjladams/BNN_DP}.}
    \label{footnote:code}}
     and 
    \item we benchmark the robustness of a variety of BNN models on five datasets, empirically demonstrating how our method outperforms state-of-the art approaches by orders of magnitude in both tightness and efficiency. 
\end{itemize}



\paragraph{Related Works}
Many algorithms have been developed for certification of deterministic (i.e., non Bayesian) neural networks (NNs) \citep{katz2017reluplex,weng2018towards,wong2018provable, bunel2020lagrangian}. 
However, these methods cannot be employed to BNNs because they all assume the weights of the network have a fixed value, whereas in the Bayesian setting they are distributed according to the BNN posterior.  Methods for certification of BNNs have recently presented in \citep{wicker2020probabilistic,berrada2021make,lechner2021infinite}. \citet{wicker2020probabilistic} consider a different notion of robustness than the one in this paper, not directly related to adversarial attacks on the BNN decision. Furthermore, that work considers a partitioning procedure in weight space that makes it applicable only to small networks and/or with small variance. 
The method proposed in \citep{berrada2021make} is based on dual optimization.  Hence, it is restricted to distributions with bounded support and needs to solve non-convex problems at large computational costs for classification tasks.
Separately, \citet{lechner2021infinite} aims to build an intrinsic safe BNN by truncating the posterior in the weight space.  
\citet{cardelli2019statistical,wicker2021bayesian} introduced statistical approaches to quantify the robustness of a BNN, which however, does not return formal guarantees, which are necessary  in safety-critical settings. 
Empirical methods that use the uncertainty of BNNs to flag adversarial examples are introduced in \citep{rawat2017adversarial, smith2018understanding}. These, however, consider only point-wise uncertainty estimates, specific to a particular test point and do not account for worst-case adversarial perturbations. 


\new{Various recent works have proposed formal methods to compute adversarial robustness for Gaussian Processes (GPs) \citep{cardelli2018robustness, smith2018understanding, patane2022adversarial,smith2022adversarial}. In BNNs, however, due to the non-linearity of activation functions, the distribution over the space of functions induced by a BNN is generally non-Gaussian, even if a Gaussian distribution in weight space is assumed. Hence, the techniques that are developed for GPs cannot be directly applied to BNNs.}

\section{Robust Certification of BNNs Problem} \label{sec:ProbForm}
\subsection{Bayesian Neural Networks (BNNs)}
\label{sec:BNNs}
For an input vector $x\in \mathbb{R}^{n_0}$, we consider fully connected neural networks $f^w:\mathbb{R}^{n_0} \to \mathbb{R}^{n_{K+1}}$
of the following form for $k=0,\ldots, K$:\footnote{\new{Note that the formulation of neural networks considered in Eqn~\eqref{eq: nn} also includes convolutional neural networks (CNNs). In fact, the convolutional operation can be interpreted as a linear transformation into a larger space; see, e.g., Chapter 3.4.1 in \citep{gal2016dropout}. This allows us to represent convolutional layers equivalently as fully connected layers, and do verification for CNNs as we  show in Section \ref{sec:ExperimentalResults}.}}
\begin{equation}\label{eq: nn} 
\begin{aligned}
    &\z{0} =  x, &&
    \prez{k+1} = \weight{k}(\z{k}^T, 1)^T, \\ 
    &\z{k} = \act{k}{\prez{k}}, && \nn{x}=\prez{K+1},  
\end{aligned}
\end{equation}
 where $K$ is the number of hidden layers, $n_k$ is the number of neurons of layer $k$, $\phi_k:\mathbb{R}^{n_k}\to \mathbb{R}^{n_k}$ is a vector of continuous activation functions 
 (one for each neuron) in layer $k$, and $\weight{k} \in \mathbb{R}^{n_{k} \times n_{k+1}}$ is the matrix of weights and biases that correspond to the $k$th layer of the network.
We denote the vector of parameters by $w = (\weight{0}^T,\hdots,\weight{K}^T)^T$ and 
the mapping from $\prez{k_1}$ to $\prez{k_2}$
by $f_{k_1:k_2}^w:\mathbb{R}^{n_{k_1}}\rightarrow \mathbb{R}^{n_{k_2}}$ for $k_1,k_2\in \{0,...,K\}$.
 $\prez{K+1}$ is the final output of the network (or the logit in the case of classification problems). 

In the Bayesian setting, one starts by assuming a prior distribution $p(w)$ over the parameters $w$ and a likelihood function $p(y \vert x,w)$. We adopt bold notation to denote random variables and write $\bnn{}$ to denote a BNN defined according to Eqns.~\eqref{eq: nn}. The likelihood is generally assumed to be Gaussian in case of regression and 
categorical for classification, where the probability for each class is given as the softmax of the neural network final logits  \citep{mackay1992practical}.
Then, given a training dataset $ \dataset =\{(x_i,y_i)\}_{i=1}^{N_\mathcal{D}}$, learning amounts to computing the posterior distribution $p (w \vert \dataset )$ 
via the Bayes rule \citep{mackay1992practical}. 
The posterior predictive distribution over an input $x^*$ is finally obtained by marginalising the posterior over the likelihood, i.e., $p(y^*|x^*,\mathcal{D}) = \int p(y^*|x^*,w)p(w|\mathcal{D})dw.$ 
The final output (decision) of the BNN, $\hat{y}(x^*)$, is then computed by minimising 
a loss function $\mathcal{L}$ averaged over the predictive distribution, i.e., 
$$\hat{y}(x^*) = \arg\min_{y} \int \mathcal{L}({y,y^*})p(y^*|x^*,\dataset)dy^*.$$
In this paper, we focus on both regression and classification problems. In regression, an $l_2$ loss is generally used, which leads to an optimal decision $\hat{y}(x^*)$  given by the mean of the predictive posterior distribution \citep{neal2012bayesian}, 
i.e., $\hat{y}(x^*) = \E{\mathbf{y} \sim p(y|x^*,\dataset)}{\mathbf{y}}.$\footnote{In the remainder, we may omit the probability measure of an expectation or probability when it is clear from the context.} 
For classification, $\ell_{0-1}$ loss is typically employed, which results in
$$\hat{y}(x^*) = \argmax_{i \in \{1,\hdots,n_{K+1}\}} \E{\bm{w} \sim p(w|\dataset)}{\softmax^{(i)}(\bnn{x^*})},$$ where 
$\softmax^{(i)}$ is the $i$th component of the $n_{K+1}$-dimensional 
softmax function.\footnote{Analogous formulas can be obtained for the weighted classification loss by factoring in misclassification weights in the argmax.}
Unfortunately, because of the non-linearity introduced by the neural network architecture, the computation of the posterior distribution and consequently of
$\hat{y}(x^*)$ cannot be done analytically. Therefore, approximate inference methods are required. In what follows, we focus on mean-field Gaussian Variational Inference (VI) approximations \citep{blundell2015weight}.  Specifically, we fit an approximating multivariate Gaussian
$q(w) = \nProbDens{w}{\mu_w}{\Sigma_w}\approx p(w\mid \dataset) $
with mean $\mu_w$ and block diagonal covariance matrix $\Sigma_w$ such that for $k\in\{0,\hdots,K\}$ and $i\in\{1,\hdots,n_k\}$, the approximating distribution of the parameters corresponding to the $i$th node of the $k$th layer is
\begin{equation}\label{eq:qForNode}
q(\weight{k}^{(i,:)})=\nDist{\new{\weight{k}^{(i,:)}} \mid \mu_{w,k,i}}{\Sigma_{w,k,i}}  
\end{equation} 
with mean $\mu_{w,k,i}$ and covariance matrix $\Sigma_{w,k,i}$. \footnote{\new{BNN-DP can be extended to Gaussian approximation distributions with inter-node or inter-layer correlations. In that case, to solve the backward iteration scheme of Theorem \ref{Theorem:ValueIteration}, the value functions need to be marginalized over partitions in weight space.}}

\begin{remark}
    \new{While our primary focus is on VI, the techniques presented in this paper can be applied to other approximate inference methods, such as HMC \citep{neal2012bayesian} and Dropout \citep{gal2016dropout}.
    In these cases, 
    the prediction of the BNN is obtained by averaging over a finite ensemble of NNs. 
    For this setting, the dynamic programming problem in Theorem \ref{Theorem:ValueIteration} reduces to computing piecewise linear relaxations for each layer of a weighted sum, i.e., an average, of deterministic neural networks, and propagating the resulting relaxations backward.}
\end{remark}

\subsection{Problem Statement}
Given a BNN $\bnn{}$ trained on a dataset $\dataset$,
as common in the literature \citep{madry2017towards}, for a generic test point $x^*$, we represent the possible adversarial perturbations by defining a compact neighbourhood $T$ around $x^*$
and measure the changes in the BNN output caused by limiting the perturbations to lie within $T$. 
\begin{definition}[Adversarial Robustness]
\label{def:AdversarialRObustness}
Consider a BNN $\bnn{}$, a compact set $T \subset \mathbb{R}^{n_0}$, and input point $x^*\in T$.
For a given threshold $\gamma > 0$, $\bnn{}$
is \textit{adversarially robust} in $x^*$ iff
\begin{equation*}
   \forall x\in T, \quad \| \hat{y}(x) - \hat{y}(x^*) \|_p\leq \gamma,  
\end{equation*}
where $\|\cdot \|_p$ is an $\ell_p$ norm. 
\end{definition}

\noindent
Definition \ref{def:AdversarialRObustness} 
is analogous to the standard notion of adversarial robustness employed for deterministic neural networks \citep{katz2017reluplex} and Bayesian models \citep{patane2022adversarial}. As discussed in Section \ref{sec:BNNs}, the particular form of a BNN's output depends on the specific application considered. Below, we focus on regression and classification problems. 


\begin{problem}\label{prob:MainProblem}
Let $T \subset \mathbb{R}^{n_0}$  be a compact subset.  Define functions $I(y)=y$ and $\softmax(y)=[ \softmax^{(1)}(y),...,\softmax^{(n_{K+1})}(y) ]$
. Then, for a BNN $\bnn{}$, $h\in\{I,\softmax \}$, and $i\in \{1,...,n_{K+1}\}$, compute:
\begin{equation}\label{eqn:ProbEquation}
\begin{aligned}
    \pi_{\min}^{(i)}(T) = \min_{x \in T} \E{\bm{w}\sim q(\cdot)}{h^{(i)}(\bnn{x})}, \\
    \pi_{\max}^{(i)}(T) = \max_{x \in T} \E{\bm{w}\sim q(\cdot)}{h^{(i)}(\bnn{x})}.
\end{aligned}
\end{equation}
\end{problem}
In the regression case ($h=I$), Problem \ref{prob:MainProblem}
seeks to compute the ranges of the expectation of the BNN for all $x\in T$. Similarly, in the classification case ($h=\softmax$), Eqns.~\eqref{eqn:ProbEquation} define the ranges of the expectation of the softmax of each class for $x\in T$. 
It is straightforward to see that these quantities are sufficient to check whether $\bnn{}$ is adversarially robust for $x\in T$; that is, if $\sup_{x\in T}||\hat{y}(x)-\hat{y}(x^*)||_p\leq \gamma$. 

\begin{remark}
Our method can be extended to other losses, i.e., other forms of $h$ in Eqns.~\eqref{eqn:ProbEquation}, as long as affine relaxations of $h$ can be computed.
\end{remark}
\paragraph{Approach Outline}
Due to the non-convex nature of $\bnn{}$ and possibly $h$, the computation of $\E{\bm{w}\sim q(\cdot)}{h(\bnn{x})}$ is analytically infeasible. 
To solve this problem, in Section~\ref{sec:SDP}, we view BNNs as stochastic dynamical systems evolving over the layers of the neural network. Through this, we show that adversarial robustness can be characterized as the solution of a dynamic programming (DP) problem. 
This allows us to break its computation into $K$ simpler optimization problems, one for each layer. 
Each problem essentially queries a back-propagation of the uncertainty of the BNN through $h$ and from one layer of the neural network to the next. Due to the non-convex nature of the layers of the BNN, these problems still cannot be solved exactly. 
We overcome this problem by using convex relaxations. Specifically,
in Section \ref{sec:SolveDynamicProgram}, we show that efficient PWA relaxations can be obtained by recursively bounding the DP problem. In Section \ref{sec:Algo}, we combine the theoretical results into a general algorithm called BNN-DP that solves Problem \ref{prob:MainProblem} efficiently.

\section{Preliminaries on Relaxations of Functions}
To propagate the uncertainty of the BNN from one layer to the other, we rely on upper and lower approximations of the corresponding Neural Network (NN), also known as \textit{relaxations}. 
For vectors $\check{x}, \hat{x}\in \mathbb{R}^n$, we denote by $[\check{x},\hat{x}]$ the $n$-dimensional hyper-rectangle defined by $\check{x}$ and $\hat{x}$, i.e., $[\check{x},\hat{x}] = [\check{x}^{(1)},\hat{x}^{(1)}] \times [\check{x}^{(2)},\hat{x}^{(2)}] \times \hdots \times [\check{x}^{(n)},\hat{x}^{(n)}]. $ 
We consider two types of relaxations, interval and affine. 
\begin{definition}[Interval Relaxation]
An interval relaxation of a function $f:\mathbb{R}^n\rightarrow \mathbb{R}^m$ over a set $T\subseteq \mathbb{R}^n$
are two vectors $\vecBiasL,\vecBiasU\in\mathbb{R}^m$ 
such that
$f(x)\in [\vecBiasL, \vecBiasU]$ for all $x\in T$.
\end{definition}
\begin{definition}[Affine Relaxation]
An affine relaxation of a function $f:\mathbb{R}^n\rightarrow\mathbb{R}^m$ over a set $T\subseteq \mathbb{R}^n$ are two affine functions $\matCoefL x+ \vecBiasL$ and $\matCoefU x+\vecBiasU$ with $\matCoefL, \matCoefU\in\mathbb{R}^{m\times n}$ 
and $\vecBiasL, \vecBiasU \in\mathbb{R}^m$ such that 
$f(x) \in [\matCoefL x+\vecBiasL, \matCoefU x+\vecBiasU]$ for all $x \in T$.
\end{definition}
Interval and symbolic arithmetic can be used to propagate relaxations through the layers of a NN.
Let, $\pospart{\alpha}\coloneqq \max\{\alpha,0\}$ and $\negpart{\alpha}\coloneqq \min\{\alpha,0\}$ 
represent the saturation operators on $\alpha$.
For a vector or matrix, $\pospart{\cdot}$ and $\negpart{\cdot}$ represent element-wise max and min, respectively. We adopt the notation of \citet{liu2021algorithms} and write interval arithmetic w.r.t.\ a linear mapping $M$ compactly as $\otimes$ where
$M \otimes [\vecBiasL,\vecBiasU] \coloneqq \big[ \pospart{M}\vecBiasL+\negpart{M}\vecBiasU, \ \pospart{M}\vecBiasU+\negpart{M}\vecBiasL \big],$
and use the similar notation for symbolic arithmetic.


\section{BNN Certification via Dynamic Program}\label{sec:SDP}


As observed in \citet{marchi2021training}, NNs and consequently BNNs can be viewed as dynamical systems evolving over the layers of the network. In particular, for $k\in \{0,...,K\}$, 
Eqn.~\eqref{eq: nn} can be rewritten as:
\begin{align} 
\label{Eqn:BNNsAsStochasticProcesses}
\z{k+1}=\act{k+1}{\bweight{k}(\z{k}^T, 1)^T} 
\end{align}
with initial condition $\z{0}=x$. 
Since, in a BNN, weights and biases are random variables sampled from the approximate posterior $q(\cdot)$, Eqn.~\eqref{Eqn:BNNsAsStochasticProcesses} describes a non-linear stochastic process evolving over the layers of the NN. 
This observation leads to the following theorem, which shows that $\E{\bm{w}\sim q(\cdot)}{h(\bnn{x})}$ can be characterized as the solution to a backward recursion DP problem.

\begin{figure}[t!]
    \centering
    \vspace{2mm}
    \includegraphics[width=.8\columnwidth]{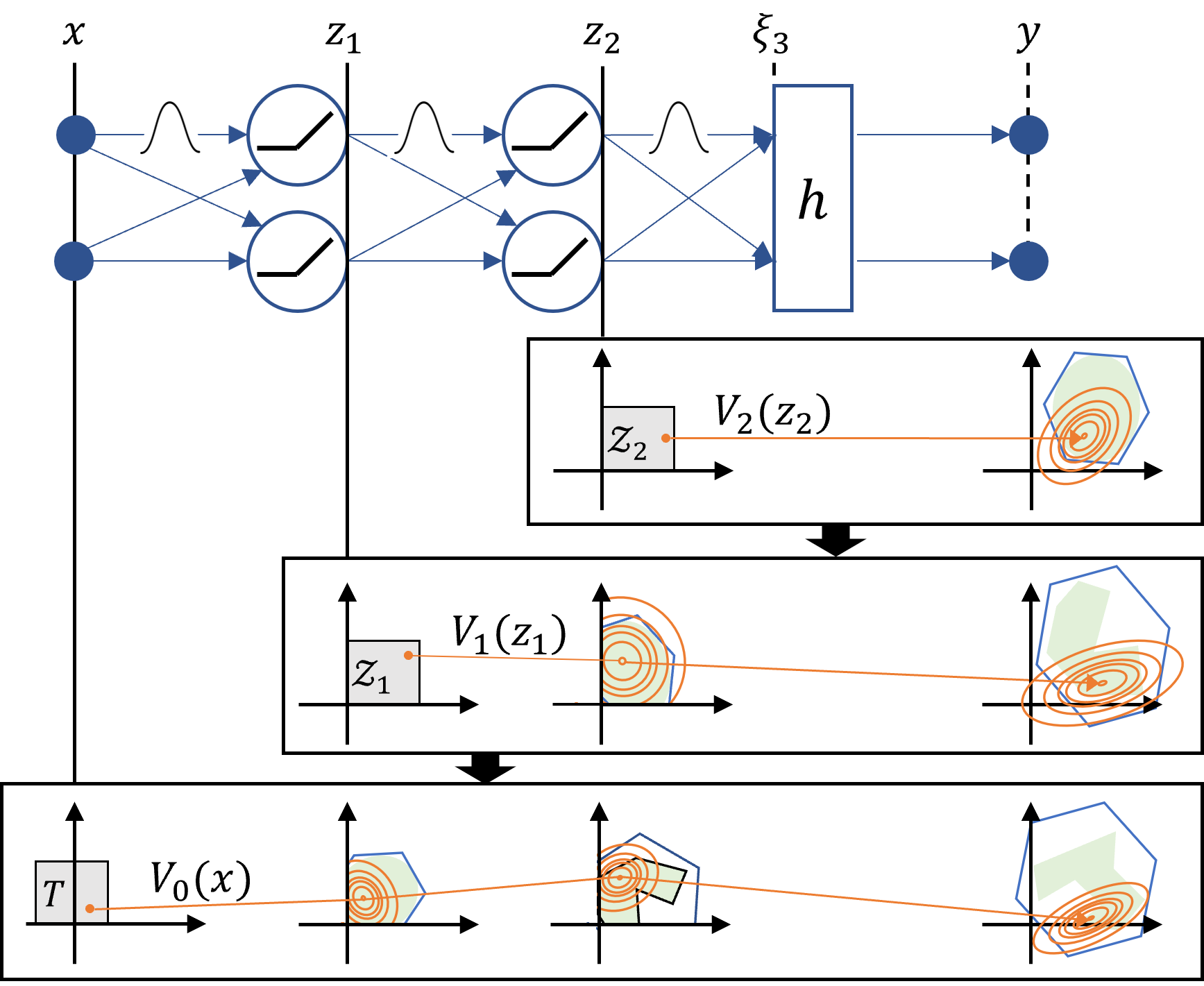}
    \caption{
    Illustration of the DP algorithm in Theorem \ref{Theorem:ValueIteration} for a BNN with two hidden layers. Value functions $V_k$ are mappings from the latent and input spaces of the BNN to the mean of the output distribution. For each mapping, the distribution and mapping of a single point is displayed in orange. Starting from the last hidden layer, we recursively compute PWA approximations of the mappings. 
    The true mean of the BNN for all $z_2\in\mathcal{Z}_2$ is in the green oval, which we over-approximate by the blue hexagon. 
    }
    \label{fig:SDP}
\end{figure}

\begin{theorem}\label{Theorem:ValueIteration}
Let $\bnn{x}$ be a fully connected BNN with $K$ hidden layers and $h:\mathbb{R}^{n_{K+1}}\to\mathbb{R}^{l}$ be an integrable function. For $k=0,...,K$, define functions $\val{k}{}:\mathbb{R}^{n_{k}}\to \mathbb{R}^{l}$ backwards-recursively as:
\begin{subequations}
    \begin{align}
        \label{Eqn:FinalConditionValueIteration}
        & \val{K}{\z{}}= \E{\bweight{K}\sim q(\cdot)}{h(\bweight{K} (z^T, 1)^T)}, \\
        \label{Eqn:MiddleCOnditionValueIteration}
        &  \val{k-1}{\z{}}= \E{\bweight{k-1}\sim q(\cdot)}{\val{k}{
        \act{k}{\bweight{k-1}(z^T, 1)^T}}}.
    \end{align}
\end{subequations}
Then, it holds that $\E{\bm{w}\sim q(\cdot)}{h(\bnn{x})} = \val{0}{x}.$
\end{theorem}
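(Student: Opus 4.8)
The plan is to prove the identity $\E{\bm{w}\sim q(\cdot)}{h(\bnn{x})} = \val{0}{x}$ by backward induction over the layers, exploiting the fact that the weights of distinct layers are independent under the mean-field posterior $q$, so that the total expectation factorizes as an iterated (tower) expectation layer by layer. Concretely, I would introduce the claim $(\star_k)$: for every $z\in\mathbb{R}^{n_k}$,
\begin{align*}
\val{k}{z} = \E{(\bweight{k},\ldots,\bweight{K})\sim q(\cdot)}{h\bigl(f_{k:K+1}^{\bm{w}}(z)\bigr)},
\end{align*}
where $f_{k:K+1}^{\bm{w}}$ denotes the sub-network mapping the pre-activation $\prez{k}$ to the output $\prez{K+1}$ (as defined just after Eqn.~\eqref{eq: nn}); taking $k=0$ with $\prez{0}=x$ then yields the theorem since $\E{\bm{w}\sim q(\cdot)}{h(\bnn{x})}=\E{\bm{w}\sim q(\cdot)}{h(f_{0:K+1}^{\bm{w}}(x))}$.

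First I would establish the base case $(\star_K)$: by Eqn.~\eqref{Eqn:FinalConditionValueIteration}, $\val{K}{z} = \E{\bweight{K}\sim q(\cdot)}{h(\bweight{K}(z^T,1)^T)}$, and since $\prez{K+1} = \bweight{K}(\z{K}^T,1)^T$ with $\z{K}$ obtained from $\prez{K}=z$ through the (deterministic) activation already absorbed — more precisely, $f_{K:K+1}^{\bm{w}}(z) = \bweight{K}(\act{K}{z}^T,1)^T$; I need to be slightly careful here about whether the recursion in Eqn.~\eqref{Eqn:FinalConditionValueIteration} takes $z$ to be $\prez{K}$ or $\z{K}$, and I would align the indexing so that $\val{k}{\cdot}$ is a function of the pre-activation $\prez{k}$, which matches the form of Eqn.~\eqref{Eqn:MiddleCOnditionValueIteration} where the activation $\phi_k$ is applied inside. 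Then, for the inductive step, assume $(\star_k)$ holds. Using Eqn.~\eqref{Eqn:MiddleCOnditionValueIteration},
\begin{align*}
\val{k-1}{z} = \E{\bweight{k-1}\sim q(\cdot)}{\val{k}{\act{k}{\bweight{k-1}(z^T,1)^T}}},
\end{align*}
and substituting the induction hypothesis for $\val{k}{\cdot}$ evaluated at the (random) argument $\act{k}{\bweight{k-1}(z^T,1)^T}$, the inner expectation is over $(\bweight{k},\ldots,\bweight{K})$, which are independent of $\bweight{k-1}$ under $q$. By Fubini's theorem (applicable since $h$ is integrable and the relevant compositions are measurable), the nested expectation collapses into a single joint expectation over $(\bweight{k-1},\ldots,\bweight{K})$ of $h(f_{k-1:K+1}^{\bm{w}}(z))$, using the composition identity $f_{k-1:K+1}^{\bm{w}} = f_{k:K+1}^{\bm{w}}\circ (\phi_k\circ (\text{affine by }\bweight{k-1}))$. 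This establishes $(\star_{k-1})$ and closes the induction.

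The main obstacle I anticipate is not conceptual but bookkeeping: (i) getting the layer-indexing convention exactly right so that value functions are consistently functions of pre-activations and the boundary term in Eqn.~\eqref{Eqn:FinalConditionValueIteration} dovetails with the definition of $f_{k_1:k_2}^{\bm{w}}$ from Section~\ref{sec:BNNs}; and (ii) justifying the interchange of expectations rigorously — this requires noting that mean-field VI makes $\bweight{0},\ldots,\bweight{K}$ mutually independent (block-diagonal $\Sigma_w$, Eqn.~\eqref{eq:qForNode}), that each $f_{k:K+1}^{\bm{w}}$ is continuous hence measurable in both its weight arguments and its input, and that integrability of $h$ together with this structure lets us apply Fubini/Tonelli to swap the order of integration without changing the value. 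I would also remark that no convexity or boundedness of the support of $q$ is used here — this step is purely measure-theoretic — which is consistent with the paper's emphasis on handling unbounded (e.g.\ Gaussian) posteriors; the relaxations enter only later when these exact but intractable value functions are over-approximated.
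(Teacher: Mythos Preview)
Your proposal is correct and takes essentially the same approach as the paper: the paper's proof also proceeds by peeling off one layer at a time via the law of total expectation and the layer-wise independence of the weights under $q$, starting from $\E{\bm{w}}{h(\bnn{x})}=\E{\bm{w}}{\val{K}{\bnnUntil{0:K}{x}}}$ and iterating backwards to $\val{0}{x}$. Your version is simply a more explicit formalisation (stating the induction hypothesis $(\star_k)$ and invoking Fubini) of the paper's terse two-line argument; the indexing issue you flag is indeed only bookkeeping, with $\val{k}{}$ being a function of the post-activation $\z{k}$ rather than the pre-activation.
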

The proof of Theorem \ref{Theorem:ValueIteration} is reported in Appendix \ref{subsec:proofValueIteration} and obtained by induction over the layers of the NN by relying on the law of total expectation and independence of the parameters distribution at different layers.\footnote{\label{footnote:qNonBlockDiag} While the vast majority of VI algorithms make this assumption, Theorem \ref{Theorem:ValueIteration} can be generalized to the case where there is inter-layer correlation by marginalizing Eqn.~\eqref{Eqn:FinalConditionValueIteration} and \eqref{Eqn:MiddleCOnditionValueIteration} over partitions in correlated weight-spaces.}
Figure \ref{fig:SDP} illustrates the backward-iteration scheme of Theorem \ref{Theorem:ValueIteration} for a two-hidden-layer BNN.
Starting from the last layer, value functions $\val{k}{}$ are constructed according to Eqns.~\eqref{Eqn:FinalConditionValueIteration} and \eqref{Eqn:MiddleCOnditionValueIteration} describing how the output of layer $k$ is transformed in the previous layers. 
Theorem~\ref{Theorem:ValueIteration} is a central piece of our framework as it allows one
to break
the computation of $\E{\bm{w}\sim q(\cdot)}{h(\bnn{x})}$ into $K+1$ (simpler) sub-problems, one for each layer of the BNN. 
In fact, note that $\val{k}{}$ is a deterministic function. Hence, all the uncertainty in $\val{k-1}{}$ depends only on the weights of layer $k-1$. This is a key point that we use to derive efficient methods to solve Problem \ref{prob:MainProblem}. 
Nevertheless, we stress that since $\val{k}{z}$ is obtained by propagating $z$ over $K-k$ layers of the BNN, this is still generally a non-convex function, whose exact optimisation is infeasible in practice. 
Consequently,  we employ the following corollary, which guarantees that,
to solve Problem \ref{prob:MainProblem},
it suffices to recursively bound $V_k$ following Eqns. \eqref{Eqn:FinalConditionValueIteration} and \eqref{Eqn:MiddleCOnditionValueIteration}.

\begin{corollary}
\label{Corol:MinMaxValFunction}
For $k\in \{1,\hdots, K\}$, let functions $\check{V}_{k}, \hat{V}_{k}: \mathbb{R}^{n_{k}}\rightarrow \mathbb{R}^{n_{l}}$ be relaxations of $\val{k}{\z{k}}$, i.e, $\forall \z{k}\in\mathbb{R}^{n_{k}}, \check{V}_{k}(\z{k})\leq \val{k}{\z{k}}\leq \hat{V}_{k}(\z{k})$. Then 
\begin{align*}
        \E{\bweight{k-1}\sim q(\cdot)}{\check{V}_{k}(
        \act{k}{\bweight{k-1}(z^T, 1)^T})} \leq \val{k-1}{\z{k}} \leq& \\
        \E{\bweight{k-1}\sim q(\cdot)}{\hat{V}_{k}(
        \act{k}{\bweight{k-1}(z^T, 1)^T})}.& 
\end{align*}
Further, for $i\in\{1,\hdots,l\}$, it holds that  $\pi_{\min}^{(i)}(T) \geq \min_{x \in T} \check{V}^{(i)}_0(x)$ and $  \pi_{\max}^{(i)}(T) \leq \max_{x \in T} \hat{V}^{(i)}_0(x).$
\end{corollary}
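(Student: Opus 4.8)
The plan is to prove Corollary~\ref{Corol:MinMaxValFunction} by a monotonicity-plus-induction argument, leveraging Theorem~\ref{Theorem:ValueIteration} for the recursive structure and the fact that each expectation in Eqns.~\eqref{Eqn:FinalConditionValueIteration} and \eqref{Eqn:MiddleCOnditionValueIteration} is an integral against a fixed (weight) probability measure, hence order-preserving. First I would establish the single-step claim: assume $\check{V}_k(\z{k}) \leq \val{k}{\z{k}} \leq \hat{V}_k(\z{k})$ pointwise. Fix any $\z{k}\in\mathbb{R}^{n_k}$. For each realization $\bweight{k-1} = W$, the argument $\act{k}{W(\z{k}^T,1)^T}$ is a deterministic point in $\mathbb{R}^{n_k}$, so the pointwise bound applies there, giving $\check{V}_k(\act{k}{W(\z{k}^T,1)^T}) \leq \val{k}{\act{k}{W(\z{k}^T,1)^T}} \leq \hat{V}_k(\act{k}{W(\z{k}^T,1)^T})$. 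Taking $\E{\bweight{k-1}\sim q(\cdot)}{\cdot}$ of the middle term recovers $\val{k-1}{\z{k}}$ by Eqn.~\eqref{Eqn:MiddleCOnditionValueIteration}, while monotonicity of expectation propagates the two outer bounds, yielding exactly the displayed inequality. (Integrability of the bounds, needed for the expectations to be well-defined, is inherited from integrability of $h$ together with the assumption that $\check{V}_k,\hat{V}_k$ are relaxations of the integrable $\val{k}{}$; I would note this rather than belabor it.)

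Next I would set up the backward induction over $k = K, K-1, \dots, 0$ to obtain global relaxations $\check{V}_0, \hat{V}_0$ of $\val{0}{}$. Here there is a subtlety the proof must address: the corollary \emph{assumes} relaxations $\check{V}_k,\hat{V}_k$ are given for each $k\in\{1,\dots,K\}$, but defining $\check{V}_{k-1}$ via the lower expectation and $\hat{V}_{k-1}$ via the upper expectation — as the single-step claim suggests — is only valid if those expectations are themselves taken as the \emph{definition} of the next-level relaxation, or if the given $\check{V}_{k-1},\hat{V}_{k-1}$ are further relaxations of them. The cleanest route is: define $\widetilde{V}_{k-1}^{\,\mathrm{lo}}(\z{}) := \E{\bweight{k-1}\sim q(\cdot)}{\check{V}_k(\act{k}{\bweight{k-1}(\z{}^T,1)^T})}$ and $\widetilde{V}_{k-1}^{\,\mathrm{hi}}$ analogously with $\hat{V}_k$; the single-step claim shows these sandwich $\val{k-1}{}$, i.e.\ they are valid relaxations of $\val{k-1}{}$, and then transitivity of $\leq$ lets any coarser relaxation $\check{V}_{k-1}\leq\widetilde{V}_{k-1}^{\,\mathrm{lo}}$, $\hat{V}_{k-1}\geq\widetilde{V}_{k-1}^{\,\mathrm{hi}}$ also sandwich $\val{k-1}{}$. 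Chaining this down to $k=0$ gives $\check{V}_0(x)\leq\val{0}{x}\leq\hat{V}_0(x)$ for all $x$.

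Finally I would close the argument for Problem~\ref{prob:MainProblem}. By Theorem~\ref{Theorem:ValueIteration}, $\E{\bm{w}\sim q(\cdot)}{h^{(i)}(\bnn{x})} = \val{0}^{(i)}(x)$ for every $x$, and in particular for every $x\in T$. Combining with the componentwise bound $\check{V}_0^{(i)}(x)\leq\val{0}^{(i)}(x)\leq\hat{V}_0^{(i)}(x)$ from the induction, we get $\check{V}_0^{(i)}(x)\leq \E{\bm{w}\sim q(\cdot)}{h^{(i)}(\bnn{x})}\leq\hat{V}_0^{(i)}(x)$ for all $x\in T$. Taking $\min_{x\in T}$ on the left and recalling $\pi_{\min}^{(i)}(T) = \min_{x\in T}\E{\bm{w}\sim q(\cdot)}{h^{(i)}(\bnn{x})}$ gives $\pi_{\min}^{(i)}(T)\geq\min_{x\in T}\check{V}_0^{(i)}(x)$, and symmetrically $\pi_{\max}^{(i)}(T)\leq\max_{x\in T}\hat{V}_0^{(i)}(x)$. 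The main obstacle is not any hard estimate but rather stating the bookkeeping precisely: being careful that ``relaxation'' bounds composed through the expectation operator remain relaxations (the role of monotonicity of $\E{}{\cdot}$ and transitivity of the ordering), and that the pointwise hypothesis on $\check{V}_k,\hat{V}_k$ is exactly what licenses substituting the random activation $\act{k}{\bweight{k-1}(\z{}^T,1)^T}$ as the argument before integrating.
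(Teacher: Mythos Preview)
Your proposal is correct and follows essentially the same approach as the paper: both rely on the monotonicity of expectation applied to the pointwise relaxation hypothesis to obtain the single-step inequality, then invoke Theorem~\ref{Theorem:ValueIteration} to identify $\val{0}{x}$ with $\E{\bm{w}\sim q(\cdot)}{h(\bnn{x})}$ and finish by taking $\min$/$\max$ over $T$. Your treatment is in fact more careful than the paper's (which simply asserts the $k=0$ bound ``by Theorem~\ref{Theorem:ValueIteration}'' without spelling out the chaining), particularly in distinguishing the expectation-defined relaxations $\widetilde{V}_{k-1}^{\,\mathrm{lo}},\widetilde{V}_{k-1}^{\,\mathrm{hi}}$ from the given $\check{V}_{k-1},\hat{V}_{k-1}$ and noting the transitivity needed.
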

Corollary \ref{Corol:MinMaxValFunction} allows us to recursively find relaxations of $\val{k}{}$ via  Theorem \ref{Theorem:ValueIteration}. In what follows, we focus on finding PWA relaxations $\check{V}_{k}{}$ and $\hat{V}_{k}{}$. To achieve that, there are two basic steps: 
(i) initialization of $\check{V}_{k}{},\hat{V}_{k}{}$ via Eqn.~\eqref{Eqn:FinalConditionValueIteration}, and (ii) backward propagation of $\check{V}_{k}{},\hat{V}_{k}{}$
through a hidden layer of the BNN via Eqn.~\eqref{Eqn:MiddleCOnditionValueIteration}. 
In Section \ref{sec:SolveDynamicProgram}, we first show an efficient method to perform step (ii) and then focus on (i).

\section{PWA Relaxation for Dynamic Programming}\label{sec:SolveDynamicProgram}
Our goal in this section is to find PWA relaxations of $V_k$. To do that, we first show how to propagate affine relaxations of the value function backwards through a single hidden layer of the BNN via Eqn.~\eqref{Eqn:MiddleCOnditionValueIteration} and then generalize this result to PWA relaxations.
Note that, because the support of a BNN is generally unbounded,  affine relaxations  of Eqn.~\eqref{Eqn:MiddleCOnditionValueIteration} and ~\eqref{Eqn:FinalConditionValueIteration} lead to overly conservative results (an affine function should over-approximate a non-linear function over an unbounded set). Thus, PWA relaxations are necessary to obtain tight approximations.
Finally, in Subsection \ref{sec:DecisionLayer}  we show how to compute relaxations for Eqn.~\eqref{Eqn:FinalConditionValueIteration}.


\subsection{Affine Value Functions}\label{sec:AffineValFunctions}
For the sake of presentation, we focus on upper bound $\valU{k-1}{}$; the lower bound case follows similarly. 
 Let $\valU{k}{}:\mathbb{R}^{n_k}\rightarrow\mathbb{R}^l$ be an affine upper bound on $\val{k}{}$. Then, by Corollary \ref{Corol:MinMaxValFunction} and the linearity of expectation, it holds that 
\begin{equation}\label{eq:PropLBValElem}
    \valU{k-1}{z}= \valU{k}{\E{\bweight{k-1}\sim q(\cdot)}{\act{k}{\bweight{k-1}(z^T,  1)^T}}}.
\end{equation}
Recall that here $q$ is a Gaussian distribution (see Section \ref{sec:BNNs}). 
Hence, 
 due to the closure of Gaussian random variables w.r.t. linear transformations, we can rewrite Eqn.~\eqref{eq:PropLBValElem} as:
\begin{equation}\label{eq:PropAffineVal}
    \valU{k-1}{\z{}} = \valU{k}{\E{\bprez{}\sim\nDist{m_{k}(\z{})}{\diag{s_{k}(\z{})}}}{\act{k}{\bprez{}}}},
\end{equation}
where $m_k:\mathbb{R}^{n_{k-1}}\rightarrow\mathbb{R}^{n_{k}}$ and $s_k:\mathbb{R}^{n_{k-1}}\rightarrow\mathbb{R}_{\geq 0}^{n_{k}}$ are defined component-wise as 
\begin{equation}\label{eq:defFuncs_mAnds}
\begin{aligned}
    &m_{k}^{(i)}(\z{}) = \mu_{w,k-1,i}(\z{}^T, 1)^T,\\
    &s_{k}^{(i)}(\z{}) = (\z{}^T, 1)\Sigma_{w,k-1,i}(\z{}^T, 1)^T, 
\end{aligned}
\end{equation}
for all $i \in \{1,\hdots,n_{k}\}$
with $\mu_{w,k-1,i}, \Sigma_{w,k-1,i}$ being mean and covariance of the $i$th node of the $k$th layer. 
$\text{diag}(s)$ is a diagonal matrix with the elements of  $s$ on the main diagonal. 
Note that Eqn~\eqref{eq:PropAffineVal} reduces the propagation of the value function to the propagation of a Gaussian random variable ($\bprez{}$) through an activation 
function ($\phi_{k}$). 
In Proposition \ref{prop:neuron}, we show how this propogation can be achieved analytically for ReLU activiation functions.  Generalization to other activation functions is discussed in Remark \ref{remark:extGenAct}. 
\begin{proposition}\label{prop:neuron}
    For $k\in\{1,\hdots,K\}$, let $\valU{k}{}$ be an affine function and $Z\subset\mathbb{R}^{n_{k-1}}$ be a compact set. Define function
    $r_k:\mathbb{R}^{n_{k-1}}\rightarrow\mathbb{R}^{n_k}_{\geq 0}$ as $r_k(z)=\sqrt{s_{k}(z)}$,
    and let $\check{r}_k,\hat{r}_k:\mathbb{R}^{n_{k-1}}\rightarrow\mathbb{R}^{n_k}_{\geq 0}$ be an affine-relaxation 
    of $r_k$ 
    w.r.t.\ $Z$. 
    Further, define $g:\mathbb{R}^2\rightarrow\mathbb{R}$ as
    $$
    g(\mu,\sigma) = \frac{\mu}{2}\left[1-\erf{\frac{-\mu}{\sigma\sqrt{2}}} \right] + \frac{\sigma}{\sqrt{2\pi}} \, e^{-(\mu/\sigma\sqrt{2})^2},
    $$
    and, let $\check{g}_i,\hat{g}_i:\mathbb{R}^{2}\rightarrow\mathbb{R}$ be an affine-relaxation of $g$ w.r.t. $\{(m_{k}^{(i)}(\z{}),r_k^{(i)}(\z{}))\mid \forall \z{}\in Z\}$,  Then, for $\matCoefL, \matCoefU\in\mathbb{R}^{n_{k-1}\times n_k}$ and $\vecBiasL, \vecBiasU\in\mathbb{R}^{n_k}$   defined as,
    $\forall i\in \{1,\hdots, n_k\}$,
    \begin{align*}
        &\matCoefL^{(i,:)} = [\nabla_{z} g(m_{k}^{(i)}(\z{}),r_k^{(i)}(\z{}))]_{\z{}=z^*}, \\
        &\vecBiasL^{(i)} = g(m_{k}^{(i)}(z^*),r_k^{(i)}(z^*))  - \matCoefL^{(i,:)}z^*,\\    
        &[\cdot, \matCoefU^{(i,:)} \z{} + \vecBiasU^{(i)}] =(m_{k}^{(i)},\hat{r}_k^{(i)})^T \otimes [\check{g},\hat{g}],
    \end{align*}
    with $z^*\in Z$ and $\nabla_z$ being the gradient w.r.t. $z$, it holds that $\forall \z{}\in Z$, 
    $\E{\bprez{}\sim \nDist{m_{k}(\z{})}{s_{k}(\z{})}}{\valU{k}{\relu{\bprez{}}}} \in \valU{k}{} \otimes [\matCoefL \z{} + \vecBiasL, \matCoefU \z{} + \vecBiasU].$
\end{proposition}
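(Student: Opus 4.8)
The plan is to reduce the single‑layer backward step~\eqref{eq:PropAffineVal} to a coordinate‑wise scalar problem, establish a convexity property of the resulting scalar map, and then obtain the affine lower bound from a tangent plane and the affine upper bound from the supplied relaxations of $g$ and $r_k$. First I would note that, since $\bprez{}\sim\nDist{m_k(\z{})}{\diag{s_k(\z{})}}$ has independent coordinates and $\phi_k$ applies $\relu{\cdot}$ component‑wise, the $i$‑th entry of $\E{\bprez{}}{\relu{\bprez{}}}$ equals $\E{\bprez{}^{(i)}\sim\nDist{m_k^{(i)}(\z{})}{s_k^{(i)}(\z{})}}{\max\{\bprez{}^{(i)},0\}}$. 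The standard closed form for the mean of a rectified Gaussian, $\mu\Phi(\mu/\sigma)+\sigma\varphi(\mu/\sigma)$ with $\Phi,\varphi$ the standard normal CDF and PDF, coincides with $g(m_k^{(i)}(\z{}),r_k^{(i)}(\z{}))$ once one uses $\Phi(x)=\tfrac{1}{2}(1+\erf{x/\sqrt{2}})$ together with $r_k^{(i)}(\z{})=\sqrt{s_k^{(i)}(\z{})}$ (with the continuous extension $g(\mu,0)=\relu{\mu}$ covering degenerate variances). Write $G^{(i)}(\z{}):=g(m_k^{(i)}(\z{}),r_k^{(i)}(\z{}))$, so that $\E{\bprez{}}{\relu{\bprez{}}}$ is the vector with entries $G^{(i)}(\z{})$.

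The crux is that each $G^{(i)}$ is convex. I would establish this from three facts: $g(\mu,\sigma)=\E{\epsilon\sim\mathcal{N}(0,1)}{\relu{\mu+\sigma\epsilon}}$ is jointly convex on $\{\sigma\geq 0\}$, being an expectation over $\epsilon$ of the convex functions $(\mu,\sigma)\mapsto\relu{\mu+\sigma\epsilon}$; $g$ is non‑decreasing in $\sigma$, since a short computation gives $\partial_\sigma g(\mu,\sigma)=\varphi(\mu/\sigma)\geq 0$; and $m_k^{(i)}$ is affine while $r_k^{(i)}$ is convex --- indeed $s_k^{(i)}(\z{})=(\z{}^T,1)\Sigma_{w,k-1,i}(\z{}^T,1)^T$ is a nonnegative convex quadratic (since $\Sigma_{w,k-1,i}\succeq 0$), so $r_k^{(i)}=\sqrt{s_k^{(i)}}$ equals the Euclidean norm of an affine function of $\z{}$ and is convex. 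Composing a jointly convex function that is non‑decreasing in its second argument with an affine first argument and a convex second argument gives a convex function, hence $G^{(i)}$ is convex (and smooth wherever $s_k^{(i)}>0$). Convexity then makes the first‑order Taylor expansion of $G^{(i)}$ at any $z^*\in Z$ a global underestimator: $G^{(i)}(\z{})\geq G^{(i)}(z^*)+\matCoefL^{(i,:)}(\z{}-z^*)$ with $\matCoefL^{(i,:)}=[\nabla_{\z{}}g(m_k^{(i)}(\z{}),r_k^{(i)}(\z{}))]_{\z{}=z^*}$, and the right‑hand side equals $\matCoefL^{(i,:)}\z{}+\vecBiasL^{(i)}$ for the stated $\vecBiasL^{(i)}$.

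For the upper bound I would use the supplied relaxations directly. Since $(m_k^{(i)}(\z{}),r_k^{(i)}(\z{}))$ lies in the set $\{(m_k^{(i)}(u),r_k^{(i)}(u))\mid u\in Z\}$ over which $\hat g_i$ over‑approximates $g$, we have $G^{(i)}(\z{})\leq\hat g_i(m_k^{(i)}(\z{}),r_k^{(i)}(\z{}))$; as $\hat g_i$ is affine, substituting the exact affine map $m_k^{(i)}(\z{})$ for its first argument and bounding the resulting linear term in the second argument over $r_k^{(i)}(\z{})\in[\check r_k^{(i)}(\z{}),\hat r_k^{(i)}(\z{})]$ with sign‑aware ($\otimes$) arithmetic yields precisely the affine upper bound $\matCoefU^{(i,:)}\z{}+\vecBiasU^{(i)}$ defined through $(m_k^{(i)},\hat r_k^{(i)})^T\otimes[\check g,\hat g]$. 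Stacking over $i$ gives, coordinate‑wise, $\matCoefL\z{}+\vecBiasL\leq\E{\bprez{}}{\relu{\bprez{}}}\leq\matCoefU\z{}+\vecBiasU$ for all $\z{}\in Z$. Applying the affine map $\valU{k}{}$ to this inclusion --- its linear part via the interval rule $\otimes$, its affine offset carried along --- and using linearity of expectation, $\valU{k}{\E{\bprez{}}{\relu{\bprez{}}}}=\E{\bprez{}}{\valU{k}{\relu{\bprez{}}}}$, we obtain $\E{\bprez{}}{\valU{k}{\relu{\bprez{}}}}\in\valU{k}{}\otimes[\matCoefL\z{}+\vecBiasL,\matCoefU\z{}+\vecBiasU]$, as claimed.

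I expect the main obstacle to be the convexity of $G^{(i)}$: one must recognise $g$ as the mean of a rectified Gaussian to get joint convexity, verify that it is non‑decreasing in $\sigma$, and then apply the monotone‑composition rule together with the norm representation of $r_k^{(i)}$. The degenerate case $s_k^{(i)}(\z{})=0$, where $G^{(i)}$ need not be differentiable, is a minor technicality that does not arise for Gaussian VI posteriors with strictly positive variances; the remaining $\otimes$ bookkeeping and the final composition with $\valU{k}{}$ are routine.
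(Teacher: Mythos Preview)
Your overall architecture matches the paper's proof: the lower bound comes from a tangent plane to the convex map $z\mapsto g(m_k^{(i)}(z),r_k^{(i)}(z))$, and the upper bound from composing the supplied affine relaxations of $g$ and of $r_k$ via symbolic interval arithmetic, after which one pushes through the affine $\hat V_k$ by $\otimes$. Where you diverge is in the proof of convexity. The paper establishes it by two explicit Hessian computations: first that $g$ is convex in $(\mu,\sigma)$ (Lemma~\ref{lemma:convexityExpectRectNormalSigmaMu}), and then, separately, that $z\mapsto g(m(z),\sqrt{s(z)})$ is convex by computing the Hessian in $z$ directly and factoring it as a PSD matrix (Lemma~\ref{lemma:convexityExpectRectLinCombNormal}), assuming $\Sigma$ positive definite. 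Your route is more structural: write $g(\mu,\sigma)=\mathbb{E}_{\epsilon\sim\mathcal{N}(0,1)}[\max\{\mu+\sigma\epsilon,0\}]$ to get joint convexity for free, verify $\partial_\sigma g=\varphi(\mu/\sigma)\geq 0$, recognise $r_k^{(i)}$ as the Euclidean norm of an affine map (hence convex, needing only $\Sigma\succeq 0$), and invoke the monotone--convex composition rule. This avoids all Hessian algebra, works under the weaker PSD assumption, and makes transparent why the argument would extend to any convex activation in place of $\relu{\cdot}$. What the paper's computation buys in return is the explicit gradient formula $\nabla_z g(m(z),r(z))$ that appears in the statement, and a concrete recipe for constructing $\hat r_k$ (tangent of $\sqrt{\cdot}$ composed with an affine upper bound on the quadratic $s_k^{(i)}$), which you simply take as given.
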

The proof of Proposition \ref{prop:neuron} is based on the convexity of the expected value of a rectified Gaussian w.r.t.\ its mean and variance.  The proof and detailed procedures for obtaining affine relaxations of $g$ and $r$ are reported in Appendix \ref{subsec:ProofNeuron}. 
Next, we show how the result of Proposition \ref{prop:neuron} can be extended to PWA relaxations of the value functions.
\begin{remark}\label{remark:extGenAct}
The results of Proposition \ref{prop:neuron} (as well as Propositions \ref{prop:nnCondExpect} and \ref{prop:boundOuterSpace} below) extend to any continuous activation function $\phi_k$. That is, as shown in \citep{benussi2022individual}, 
every continuous activation function can be under and over-approximated by PWA functions $\check{\phi}_k,\hat{\phi}_k:\mathbb{R}^{n_k}\rightarrow\mathbb{R}^{n_k}$ such that $\check{\phi}_k\leq \phi_k \leq \hat{\phi}_k$. Consequently, $
    \mathbb{E}\bigl[\check{\phi}_k(\bprez{}) \bigr] \leq \E{}{\act{k}{\bprez{}}}\leq
    \mathbb{E}\bigl[\hat{\phi}_k(\bprez{}) \bigr]$, which allows the extension of 
    Proposition~\ref{prop:neuron}
    from $\text{ReLU}$ to general continuous $\phi_k$.
\end{remark}


\subsection{Piecewise Affine Value Functions}\label{subsec: pwa value functions}

For $N\in\mathbb{N}$, let $\mathcal{Z}_k = \{Z_{k,1},\hdots,Z_{k,N}\}\subseteq\mathbb{R}^{n_k}$ 
be a partition of the support of $f^{\bm{w}}_{0:k}$ 
, and let $\check{V}_{k,j}, \hat{V}_{k,j}:\mathbb{R}^{n_k}\rightarrow \mathbb{R}^l$ be an affine relaxation of $\val{k}{\z{k}}$ w.r.t.\ $Z_{k,j}$ for all $j\in\{1,\hdots,N\}$, i.e.,  
$\forall \z{k}\in Z_{k,j}$ $\val{k}{\z{k}} \leq \hat{V}_{k,j}$ with $\hat{V}_{k,j} \coloneqq \matCoefU_{k,j}\z{k}+\vecBiasU_{k,j}.$ 
Then, by Eqn.~\eqref{Eqn:MiddleCOnditionValueIteration} and the law of total expectation, we obtain an upper bound on $\val{k-1}{}$:
\begin{multline}
\label{eq:condLBValFunc}
    \val{k-1}{\z{}} \leq \sum_{j=1}^N \vecBiasU_{k,j}\underbrace{\Prob{\bprez{}\sim \nDist{m_{k}(\z{})}{\diag{s_{k}(\z{})}}}{\bprez{}\in Z_{k,j}}}_{\ref{eq:condLBValFunc} \text{a}} + \\
    \matCoefU_{k,j}\underbrace{\nncondE{\bprez{}\sim \nDist{m_{k}(\z{})}{\diag{s_{k}(\z{})}}}{\act{k}{\bprez{}}}{\bprez{}\in Z_{k,j}}}_{\ref{eq:condLBValFunc}\text{b}}, 
\end{multline}
where $\nncondE{\bprez{}\sim p}{\bprez{}}{\bprez{}\in Z}\coloneqq \condE{\bprez{}\sim p}{\bprez{}}{\bprez{}\in Z}\Prob{\bprez{}\sim p}{\bprez{}\in Z}$.
The lower bound on $\val{k-1}{}$ follows similarly.  
Term \ref{eq:condLBValFunc}a is simply the probability that a Gaussian random variable ($\bprez{}$) is in a given set (partition $Z_{k,j}$). If the partition is hyper-rectangular, in Lemma \ref{lemma:closedFormProbRect} we express Term \ref{eq:condLBValFunc}a in closed-form. 
\begin{lemma}\label{lemma:closedFormProbRect}
    For $k\in\{1,\hdots,K\}$, $\check{\prez{}}, \hat{\prez{}}\in\mathbb{R}^{n_k}$, it holds that
    \footnote{A similar result holds for unbounded regions defined by vector $\check{z}$, that is, $\bprez{}\in[\check{z},\infty)$ or $\bprez{}\in(\infty, \check{z}]$, as shown in Appendix \ref{sec:ProofClosedFormProbRect}.}
    \begin{align}\label{eq:closedFormProbRect}
    &\Prob{\bprez{}\sim\nDist{m_{k}(\z{})}{\diag{s_{k}(\z{})}}}{\bprez{}\in [\check{\prez{}}, \hat{\prez{}}]} =  \\
    &\frac{1}{2^{n_k}} \prod_{i=1}^{n_k}\erf{\frac{\hat{\prez{}}^{(i)}-m_{k}^{(i)}(\z{})}{\sqrt{2s_{k}^{(i)}(\z{})}}} - 
      \erf{\frac{\check{\prez{}}^{(i)}-m_{k}^{(i)}(\z{})}{\sqrt{2s_{k}^{(i)}(\z{})}}} \nonumber
    \end{align}
\end{lemma}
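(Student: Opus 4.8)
The plan is to compute the probability directly from the product structure of the density of $\bprez{}$. Under $\nDist{m_{k}(\z{})}{\diag{s_{k}(\z{})}}$, the covariance is diagonal, so the components $\bprez{}^{(1)},\hdots,\bprez{}^{(n_k)}$ are mutually independent, with $\bprez{}^{(i)}\sim\nDist{m_{k}^{(i)}(\z{})}{s_{k}^{(i)}(\z{})}$ a univariate Gaussian. The event $\{\bprez{}\in[\check{\prez{}},\hat{\prez{}}]\}$ is the Cartesian product $\bigcap_{i=1}^{n_k}\{\bprez{}^{(i)}\in[\check{\prez{}}^{(i)},\hat{\prez{}}^{(i)}]\}$, so by independence the joint probability factorizes into $\prod_{i=1}^{n_k}\Prob{}{\bprez{}^{(i)}\in[\check{\prez{}}^{(i)},\hat{\prez{}}^{(i)}]}$.

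The remaining work is the standard one-dimensional computation: for a scalar Gaussian with mean $\mu$ and variance $\sigma^2$, I would write the interval probability as the difference of CDFs, $\Phi\!\big((\hat{\prez{}}^{(i)}-\mu)/\sigma\big)-\Phi\!\big((\check{\prez{}}^{(i)}-\mu)/\sigma\big)$, and then use the identity $\Phi(t)=\tfrac12\big(1+\erf{t/\sqrt{2}}\big)$. The two additive constants $\tfrac12$ cancel in the difference, leaving $\tfrac12\big(\erf{(\hat{\prez{}}^{(i)}-\mu)/(\sigma\sqrt2)}-\erf{(\check{\prez{}}^{(i)}-\mu)/(\sigma\sqrt2)}\big)$; substituting $\mu=m_{k}^{(i)}(\z{})$ and $\sigma=\sqrt{s_{k}^{(i)}(\z{})}$ and collecting the factor $1/2$ from each of the $n_k$ terms yields the global prefactor $1/2^{n_k}$ and exactly Eqn.~\eqref{eq:closedFormProbRect}.

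For the unbounded case mentioned in the footnote (e.g.\ $\bprez{}^{(i)}\in[\check{z}^{(i)},\infty)$), the only change is to replace the corresponding factor $\erf{(\hat{\prez{}}^{(i)}-m_{k}^{(i)}(\z{}))/\sqrt{2s_{k}^{(i)}(\z{})}}$ by its limit as $\hat{\prez{}}^{(i)}\to\infty$, which is $1$; the analogous substitution with $-1$ handles a lower-unbounded coordinate, and the factorization argument is otherwise unchanged.

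There is essentially no obstacle here: the proof is a direct consequence of the diagonal (hence independent) covariance structure plus the textbook relation between the Gaussian CDF and the error function. The only point requiring a line of care is verifying that the constant terms in $\Phi(t)=\tfrac12(1+\erf{t/\sqrt2})$ cancel cleanly in each coordinate so that the sole surviving numerical factor is $1/2^{n_k}$; this is routine. I would present it as a short two-step argument (factorize by independence, then evaluate each scalar factor), relegating the scalar CDF computation to one displayed line.
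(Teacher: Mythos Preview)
Your proposal is correct and follows essentially the same approach as the paper: factorize the probability over coordinates using the diagonal covariance (independence), then express each one-dimensional interval probability via the error function, with the unbounded case handled by taking $\erf{\cdot}\to\pm 1$. The paper's proof is terser (it writes the per-coordinate $\tfrac12[\erf{\cdot}-\erf{\cdot}]$ directly rather than passing through $\Phi$), but the argument is the same.
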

Term \ref{eq:condLBValFunc}b is the conditional expectation of the random variable propagated through an activation function. 
The following shows that we can decompose this term in expectations, which we can bound using the result of Proposition \ref{prop:neuron}, and probabilities for which Lemma \ref{lemma:closedFormProbRect} can be applied.
\begin{proposition}\label{prop:nnCondExpect}
    For $k\in\{1,\hdots,K\}$, vectors $\check{\prez{}}, \hat{\prez{}}\in\mathbb{R}^{n_k}$, and $\bprez{}\sim\nDist{m_{k}(\z{})}{\diag{s_{k}(\z{})}}$, it holds that\footnote{\label{footnote:gen2identity} A similar relation can be obtained for $\phi_k$ being the identity function, as shown in Appendix \ref{sec:ProofnnCondExpect}.}
    \begin{align*}
        &\nncondE{}{\relu{\bprez{}}}{\bprez{}\in [\check{\prez{}},\hat{\prez{}}]}= \\
        &\qquad\E{}{\relu{\bprez{} + \pospart{\check{\prez{}}}}} - \E{}{\relu{\bprez{} + \pospart{\check{\prez{}}}}} + \\
        &\qquad\quad \pospart{\check{\prez{}}}\Prob{}{\bprez{}\in [\pospart{\check{\prez{}}}, \infty)} -  \pospart{\hat{\prez{}}}\Prob{}{\bprez{}\in [\pospart{\hat{\prez{}}}, \infty)}.
    \end{align*}
\end{proposition}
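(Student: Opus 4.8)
The plan is to reduce the claimed identity to a one-dimensional Gaussian computation and evaluate it in closed form. Recall that $\bprez{}\sim\nDist{m_k(\z{})}{\diag{s_k(\z{})}}$ has a \emph{diagonal} covariance (this is exactly where the block-diagonal structure of the variational posterior $q$ is used), so its coordinates $\bprez{}^{(i)}\sim\nDist{m_k^{(i)}(\z{})}{s_k^{(i)}(\z{})}$ are mutually independent; moreover $\relu{\cdot}$ and $\pospart{\cdot}$ act coordinate-wise and $[\check{\prez{}},\hat{\prez{}}]$ is a product of intervals. Hence every term of the claimed identity decouples across coordinates, and --- recalling $\nncondE{}{X}{A}=\E{}{X\,\mathbf 1\{A\}}$ --- it suffices to establish, for a scalar $\mathbf z\sim\nDist{m}{s}$ and reals $a\le b$ in the roles of $\check{\prez{}}^{(i)}\le\hat{\prez{}}^{(i)}$, the scalar statement
\begin{multline*}
\E{}{\relu{\mathbf z}\,\mathbf 1\{a\le\mathbf z\le b\}}=\E{}{\relu{\mathbf z-\pospart{a}}}-\E{}{\relu{\mathbf z-\pospart{b}}}\\+\pospart{a}\,\Prob{}{\mathbf z\ge\pospart{a}}-\pospart{b}\,\Prob{}{\mathbf z\ge\pospart{b}},
\end{multline*}
after which the proposition follows by stacking the $n_k$ coordinates.

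I would prove this scalar identity in two moves. First, I remove the rectifier from inside the truncation: a short case split on the signs of $a$ and $b$ shows that $\max(z,0)\,\mathbf 1\{a\le z\le b\}$ equals $z\,\mathbf 1\{\pospart{a}\le z\le\pospart{b}\}$ for every $z$, so that $\E{}{\relu{\mathbf z}\,\mathbf 1\{a\le\mathbf z\le b\}}=\int_{\pospart{a}}^{\pospart{b}} z\,\nProbDens{z}{m}{s}\,dz$, which I split as $\int_{\pospart{a}}^{\infty} z\,\nProbDens{z}{m}{s}\,dz-\int_{\pospart{b}}^{\infty} z\,\nProbDens{z}{m}{s}\,dz$ (legitimate since $0\le\pospart{a}\le\pospart{b}$ and both tails are finite). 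Second, I evaluate a single tail: for $t\ge 0$ the change of variables $u=z-t$ carries $\nProbDens{z}{m}{s}$ to $\nProbDens{u}{m-t}{s}$, so that
\begin{equation*}
\int_t^{\infty} z\,\nProbDens{z}{m}{s}\,dz=\int_0^{\infty}(u+t)\,\nProbDens{u}{m-t}{s}\,du=\E{}{\relu{\mathbf z-t}}+t\,\Prob{}{\mathbf z\ge t},
\end{equation*}
using $\mathbf z-t\sim\nDist{m-t}{s}$, the closed form $\E{\mathbf z'\sim\nDist{\mu}{\nu}}{\relu{\mathbf z'}}=\int_0^{\infty} z\,\nProbDens{z}{\mu}{\nu}\,dz$ (the function $g$ of Proposition~\ref{prop:neuron}), and $\Prob{}{\mathbf z-t\ge 0}=\Prob{}{\mathbf z\ge t}$. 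Applying the second move with $t=\pospart{a}$ and with $t=\pospart{b}$ and subtracting yields the scalar identity, hence the proposition. The variant for $\phi_k$ the identity map (footnote~\ref{footnote:gen2identity}) follows from the same argument with the first move omitted and $\relu{\cdot}$, $\pospart{\cdot}$ replaced by the identity throughout.

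None of this is deep: the only point that needs care is the first move together with the coordinate-wise reduction --- one must track which interval endpoint survives after intersecting $[a,b]$ with $[0,\infty)$, and use that conditioning on coordinate $i$ of the box constrains only coordinate $i$ of $\bprez{}$ because the covariance is diagonal. Everything else is a one-line Gaussian tail integral; and since the right-hand side involves only expectations of shifted rectifiers and Gaussian tail probabilities, it is exactly the form for which Proposition~\ref{prop:neuron} supplies affine relaxations and Lemma~\ref{lemma:closedFormProbRect} a closed form --- which is why this is the right decomposition for the backward recursion of Section~\ref{subsec: pwa value functions}.
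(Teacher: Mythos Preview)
Your argument is correct and follows essentially the same route as the paper's proof: reduce to a one-dimensional computation via the diagonal covariance, pass from $\relu{\cdot}$ to the identity by replacing the integration limits $(\check{\prez{}},\hat{\prez{}})$ with $(\pospart{\check{\prez{}}},\pospart{\hat{\prez{}}})$, split the resulting finite integral into the difference of two tail integrals, and evaluate each tail by the shift $u=z-t$. Your sign $\relu{\mathbf z-\pospart{a}}$ is the correct one (the ``$+$'' in the displayed statement, and the duplicated $\pospart{\check{\prez{}}}$ in the second expectation, are typos).
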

Next, we show how these results can be extended to unbounded sets in partition $\mathcal{Z}_k$, i.e., unbounded support $f^{\bm{w}}_{0:k}$.

\paragraph*{Unbounded Support}
If $f^{\bm{w}}_{0:k}$ has an unbounded support, then there must necessarily be at least a region that is unbounded in the partition $\mathcal{Z}_k$. While for this region we can still apply Lemma \ref{lemma:closedFormProbRect} to compute Term \ref{eq:condLBValFunc}a, we cannot use Proposition~\ref{prop:nnCondExpect} to compute a bound for Term \ref{eq:condLBValFunc}b. 
Instead, we rely on Proposition~\ref{prop:boundOuterSpace} (below), which derives relaxations based on the fact that Gaussian distributions decay exponentially fast (thus, faster than a linear function).
\begin{proposition}\label{prop:boundOuterSpace}
    For $k\in\{1,\hdots,K\}$, $i\in\{1,\hdots,n_{k}\}$,  and vector $\check{\prez{}}\in\mathbb{R}^{n_k}$, it holds that\footnote{A similar relation can be obtained for $\phi_k$ being the identity function, as shown in Appendix \ref{sec:ProofBoundOuterSpace}.}
    \begin{gather*}
        \new{
        \frac{1}{2}\negpart{m_k^{(i)}(z)}} \leq \\
        \new{\nncondE{\bprez{}\sim\nDist{m_{k}(z)}{\diag{s_{k}(z)}}}{\relu{\bprez{}}}{\bprez{}\in [\check{\prez{}},\infty)}} \\
        \new{\leq  \frac{1}{2}\pospart{m_k^{(i)}(z)} + \sqrt{\frac{s_k^{(i)}(z)}{2\pi}}.}
    \end{gather*}
    %
\end{proposition}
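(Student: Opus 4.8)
The plan is to collapse the statement to a one-dimensional computation by exploiting that the covariance $\diag{s_k(z)}$ of $\bprez{}$ is diagonal, so the coordinates $\bprez{}^{(1)},\dots,\bprez{}^{(n_k)}$ are independent. Unfolding the definition of $\nncondE{}{\cdot}{\cdot}$ (an ordinary expectation with the conditioning event turned into an indicator factor), the $i$-th component of the quantity to be bounded is $\E{}{\relu{\bprez{}^{(i)}}\,\mathbf{1}[\bprez{}\in[\check{\prez{}},\infty)]}$, which by independence factorises as
\[
\E{}{\relu{\bprez{}^{(i)}}\,\mathbf{1}[\bprez{}^{(i)}\ge\check{\prez{}}^{(i)}]}\cdot\prod_{j\ne i}\Prob{}{\bprez{}^{(j)}\ge\check{\prez{}}^{(j)}},
\]
where $\bprez{}^{(i)}\sim\nDist{m_k^{(i)}(z)}{s_k^{(i)}(z)}$ and every factor in the product lies in $[0,1]$.

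For the lower bound, $\relu{\cdot}$ and every indicator are nonnegative, so the whole expression is $\ge 0\ge\tfrac12\negpart{m_k^{(i)}(z)}$ since $\negpart{\cdot}\le 0$; this disposes of the ReLU case. For the identity-activation variant mentioned in the footnote, $\E{}{\bprez{}^{(i)}\mathbf{1}[\bprez{}^{(i)}\ge\check{\prez{}}^{(i)}]}$ may be negative, but since the unbounded cell is a tail region $\check{\prez{}}^{(i)}$ sits to the right of the mean $m_k^{(i)}(z)$; writing this truncated first moment in closed form and using $1-\Phi(t)\le\tfrac12$ for $t\ge 0$ shows it is $\ge\tfrac12\negpart{m_k^{(i)}(z)}$, and multiplying by the probability product in $[0,1]$ only moves a negative number closer to $0$.

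For the upper bound, bound the product factors by $1$, reducing to $\E{}{\relu{\bprez{}^{(i)}}\mathbf{1}[\bprez{}^{(i)}\ge\check{\prez{}}^{(i)}]}$. Because the cell is a tail region, $\check{\prez{}}^{(i)}\ge\pospart{m_k^{(i)}(z)}\ge 0$, so on the conditioning event $\relu{\cdot}$ acts as the identity and the expectation equals, by the same truncated-Gaussian $\erf$-expressions underlying Proposition~\ref{prop:neuron} and Lemma~\ref{lemma:closedFormProbRect},
\[
m_k^{(i)}(z)\bigl(1-\Phi(t)\bigr)+\sqrt{s_k^{(i)}(z)}\,\varphi(t),\qquad t=\frac{\check{\prez{}}^{(i)}-m_k^{(i)}(z)}{\sqrt{s_k^{(i)}(z)}}\ge 0,
\]
with $\Phi,\varphi$ the standard normal cdf/pdf. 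Then $1-\Phi(t)\le\tfrac12$ gives $m_k^{(i)}(z)\bigl(1-\Phi(t)\bigr)\le\tfrac12\pospart{m_k^{(i)}(z)}$ (trivially when $m_k^{(i)}(z)<0$), and $\varphi(t)\le\varphi(0)=1/\sqrt{2\pi}$ gives $\sqrt{s_k^{(i)}(z)}\,\varphi(t)\le\sqrt{s_k^{(i)}(z)/(2\pi)}$; summing yields the claimed upper bound. This is exactly where the informal statement that a Gaussian tail decays faster than a linear function grows is turned into an inequality.

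I expect the main obstacle to be this last step: one must use the position of the unbounded cell to control the Gaussian-cdf tail factor and the kink of $\relu{\cdot}$ at $0$ at once. Dropping that information — e.g.\ crudely replacing $\relu{\bprez{}^{(i)}}\mathbf{1}[\bprez{}^{(i)}\ge\check{\prez{}}^{(i)}]$ by $\relu{\bprez{}^{(i)}}$ and using $\E{}{\relu{\bprez{}^{(i)}}}=g\bigl(m_k^{(i)}(z),\sqrt{s_k^{(i)}(z)}\bigr)$ — only gives the weaker bound $\pospart{m_k^{(i)}(z)}+\sqrt{s_k^{(i)}(z)/(2\pi)}$, so the factor $\tfrac12$ genuinely hinges on the cell being a tail. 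The remaining ingredients (the independence factorisation and evaluating the truncated moments) are routine.
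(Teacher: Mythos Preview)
Your route matches the paper's: reduce to the one-dimensional truncated first moment
\[
\int_{\check{\zeta}^{(i)}}^{\infty}\zeta\,\mathcal{N}\bigl(\zeta\mid m_k^{(i)}(z),s_k^{(i)}(z)\bigr)\,d\zeta \;=\; \tfrac{1}{2}\,m_k^{(i)}(z)\bigl(1-\mathrm{erf}(\tilde t)\bigr)+\sqrt{\tfrac{s_k^{(i)}(z)}{2\pi}}\,e^{-\tilde t^{2}},
\]
and then bound the $\mathrm{erf}$ factor and the exponential separately in $[0,1]$. Your explicit independence factorisation, carrying the cross-coordinate product $\prod_{j\ne i}\mathbb{P}[\cdot]\in[0,1]$, is a little more careful than the paper's direct reduction to the single-coordinate integral.

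The one substantive difference is how the $\tfrac12$ is secured. You add the hypothesis $\check{\zeta}^{(i)}\ge\pospart{m_k^{(i)}(z)}$ (``tail region''), which gives $t\ge0$ and hence $1-\Phi(t)\le\tfrac12$; this hypothesis is not part of the stated proposition. The paper instead writes the ReLU closed form with $\pospart{\tilde{\zeta}^{(i)}(z)}$ inside the $\mathrm{erf}$ and the exponential (positive part taken \emph{after} standardisation), so that $0\le\mathrm{erf}(\pospart{\cdot})\le1$ holds for arbitrary $\check{\zeta}$. Your closing remark is on point, though: without some tail restriction the stated upper bound with the $\tfrac12$ can fail (for $\check{\zeta}^{(i)}\ll m_k^{(i)}(z)$ the integral is $\approx m_k^{(i)}(z)$, exceeding $\tfrac12\pospart{m_k^{(i)}(z)}+\sqrt{s_k^{(i)}(z)/(2\pi)}$), and the paper's $\pospart{\tilde{\zeta}^{(i)}}$ identity is not exact in that regime either. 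So your extra hypothesis is not gratuitous---it makes explicit the regime in which the proposition is meant to be applied, namely when the unbounded cell is the complement of the high-probability box from Proposition~\ref{prop:FindRectForProbBounds}.
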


\begin{figure*}[t]
    \centering
    \begin{subfigure}{0.445\textwidth}
        \centering
        \includegraphics[width=0.492\textwidth]{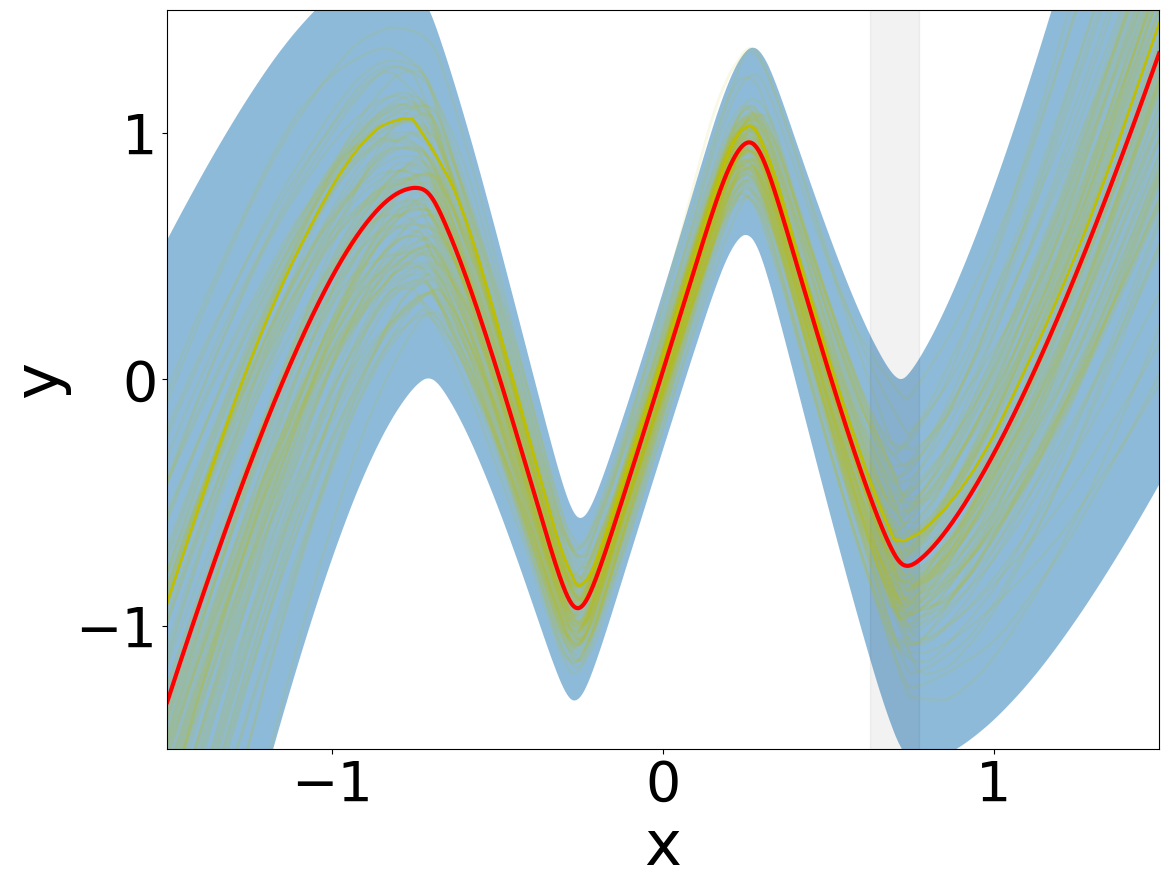}
        \includegraphics[width=0.492\textwidth]{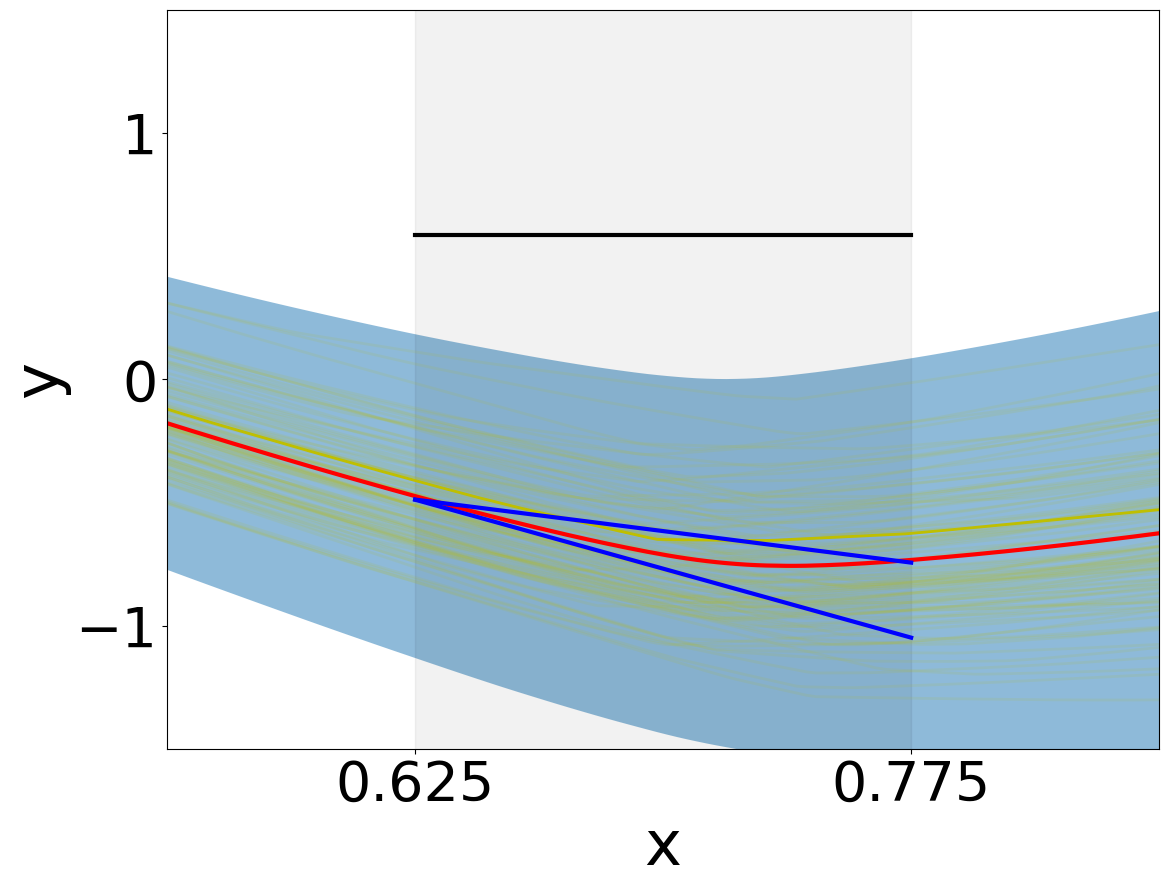}
        \caption{BNN with 1 hidden layer and 2048 nodes. }
        \label{fig:Noisy_sine_1/hid=2048_arch=fc1}
    \end{subfigure}
    \begin{subfigure}[t]{0.1\textwidth}
            \centering
            \includegraphics[width=0.97\textwidth]{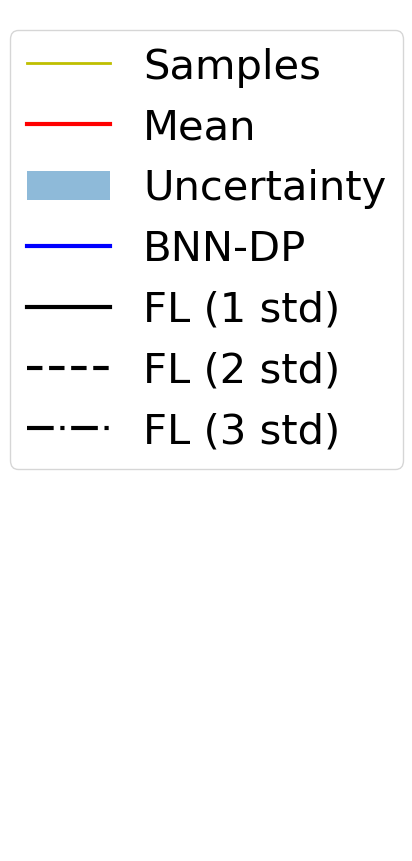}
        \end{subfigure}
    \begin{subfigure}{0.445\textwidth}
        \centering
        \includegraphics[width=0.492\textwidth]{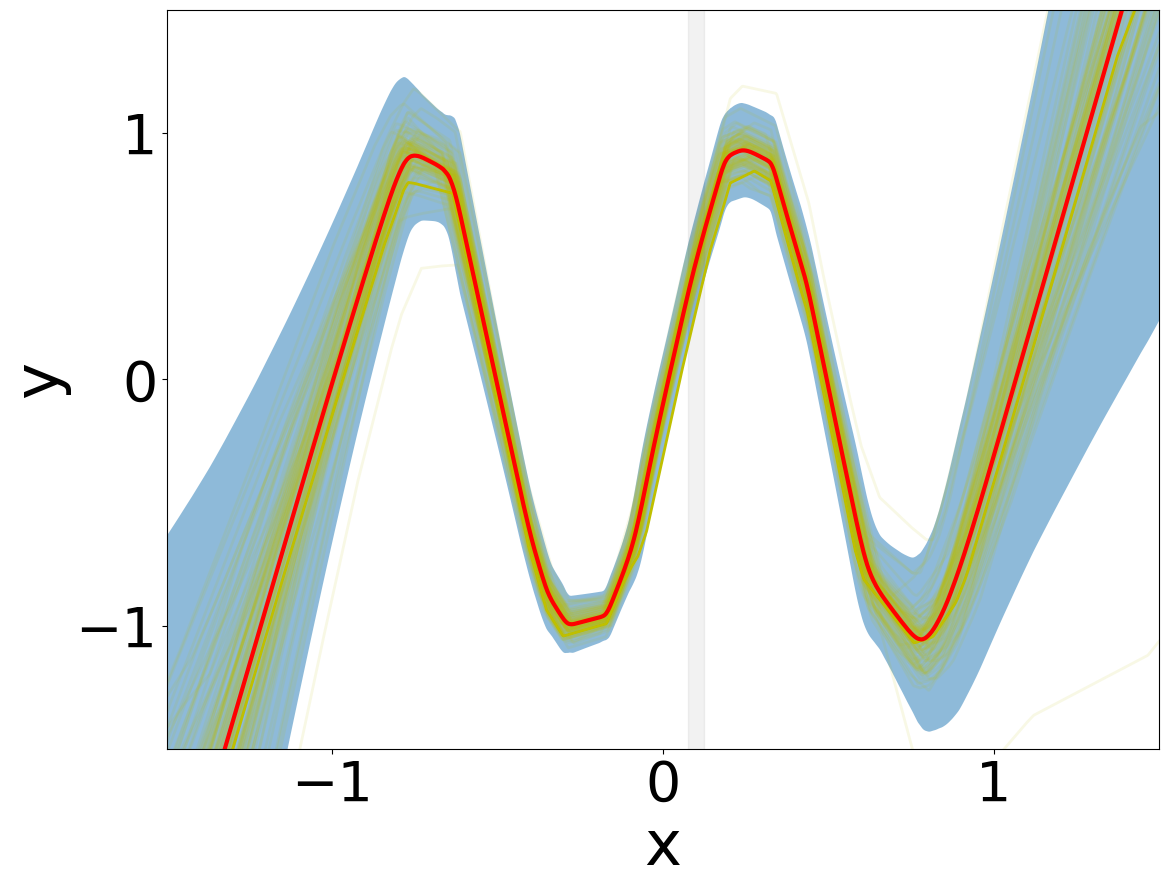}
        \includegraphics[width=0.492\textwidth]{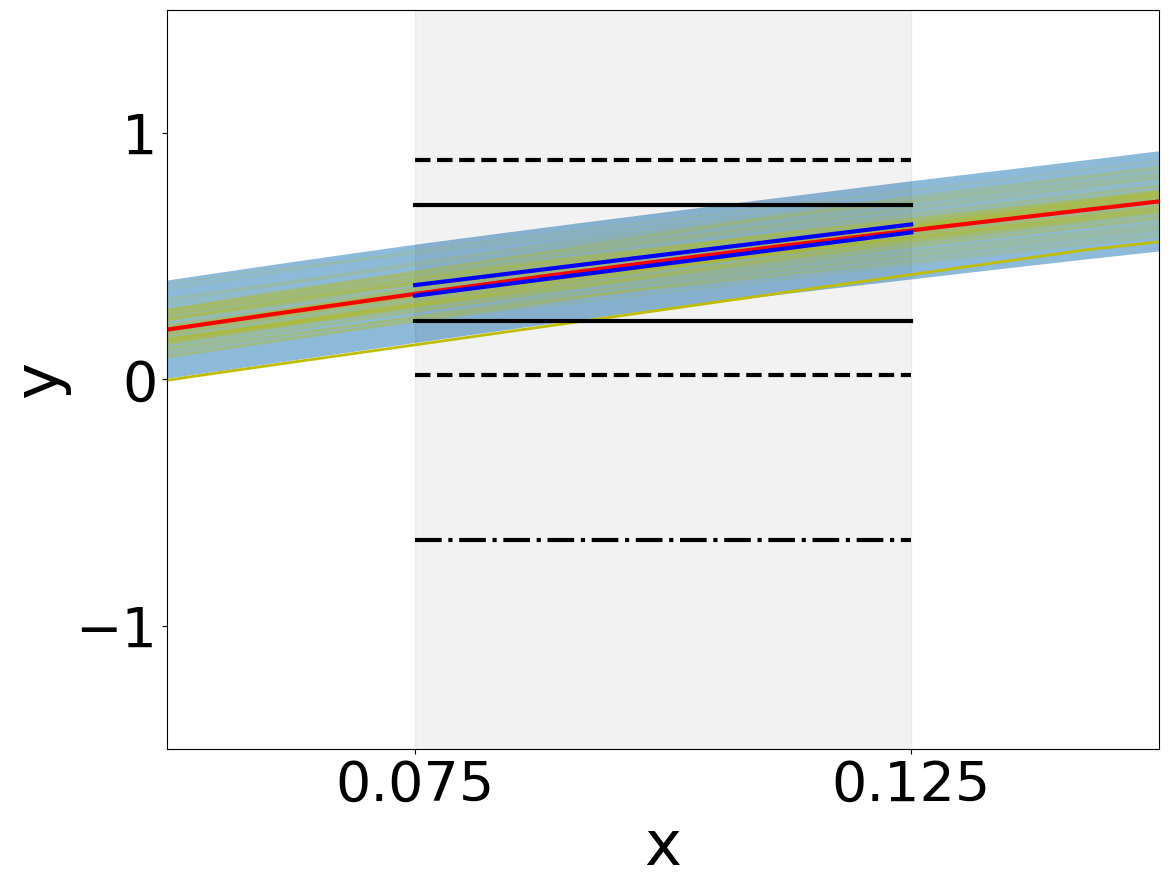}
        \caption{BNN with 2 hidden layers and 64 nodes per layer.}
        \label{fig:Noisy_sine_1/hid=64_arch=fc2}
    \end{subfigure}
    \caption{Certified affine bounds on the mean of BNNs trained on the 1D Noisy Sine dataset w.r.t. the grey marked interval of input. 
    }
    \label{fig:noisy_sine_1}
\end{figure*}

 \begin{algorithm}
\caption{Adversarial Robustness for Classification}\label{al:classification}
\begin{algorithmic}[1]
\FUNCTION{Classification($T$, $\{N_k\}_{k=1}^{K+1}$)}
    \FOR{$k\in \{1, \hdots, K+1\}$} 
        \STATE $Z_{k,main} = [\check{\zeta}_k, \hat{\zeta}_k]$ \texttt{(Prop \ref{prop:FindRectForProbBounds})}
        \STATE $\{Z_{k,j}\}_{j=1}^{N_k-1}=\texttt{REFINE}(Z_{k,main})$
        \STATE $\mathcal{Z}_k = \{Z_{k,j}\}_{j=1}^{N_k-1}  \bigcup Z^C_{k,main}$
    \ENDFOR
    \STATE $\{[\check{V}_{K+1,j}, \hat{V}_{K+1,j}]\}_{j=1}^{N_{K+1}} =$ \texttt{IBPSoftmax}($\mathcal{Z}_{K+1}$)
    \FOR{$k\in\{K+1,\hdots,1\}$, \textbf{for} $l\in\{1,\hdots,N_k\}$}
        \STATE $[\check{V}_{k-1,l}, \hat{V}_{k-1,l}] =$ 
        
        \qquad\qquad \texttt{BP}($\{[\check{V}_{k,j}, \hat{V}_{k,j}]\}_{j=1}^{N_{k}}$, $\mathcal{Z}_k$, $\mathcal{Z}_{k-1}^{(l)}$)
    \ENDFOR
    \STATE \textbf{Return:} $\min_{x\in T} \check{V}_{0}(x)$, $\max_{x\in T} \hat{V}_{0}(x)$
\ENDFUNCTION
\end{algorithmic}
\end{algorithm}


\subsection{Relaxation of the Last Layer of BNN}\label{sec:DecisionLayer} 
        

We show how to compute interval relaxations of Eqn~\eqref{Eqn:FinalConditionValueIteration}. For the regression case ($h=I$), the process is simple since Eqn. \eqref{Eqn:FinalConditionValueIteration} becomes an affine function. That is, $\val{K}{\z{}}= m_K(\z{})$, where $m_K(\z{})$ as defined in Eqn.~\eqref{eq:defFuncs_mAnds}, and hence no relaxation is required. 
For classification, however, further relaxations are needed because $h=\softmax$, i.e.,
the output distribution of the BNN (the logit) is propagated through the softmax. 
The following proposition shows that an interval relaxation can be obtained by relaxing the distribution of $\bweight{K} (z^T, 1)^T$ by Dirac delta functions on the extremes of $h$ for each set in the partition of the BNN's output. 

\begin{proposition}\label{prop:decisionLayer}
    For $N\in\mathbb{N}$, let $\{Z_1,\hdots,Z_{N}\}\subseteq\mathbb{R}^{n_{K+1}}$ be a partition of $\supp{\bnn{x}}$. Then, for $i\in\{1,\hdots,n_{K+1}\}$ and $\mathbf{w}\sim q(\cdot)$, it holds that
    \begin{align*}
        & \sum_{j=1}^N [\min_{\prez{}\in Z_j} h^{(i)}(\prez{})] \Prob{}{\bnn{x}\in Z_j}
        \leq \E{}{h^{(i)}(\bnn{x}} \leq  \\
        &\qquad \qquad\qquad
        \sum_{j=1}^N [\max_{\prez{}\in Z_j} h^{(i)}(\prez{})] \Prob{}{\bnn{x}\in Z_j}.
    \end{align*}
\end{proposition}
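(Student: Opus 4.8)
The plan is to bound the expectation $\E{}{h^{(i)}(\bnn{x})}$ by partitioning the probability mass of the logit $\bnn{x}$ over the sets $Z_1,\hdots,Z_N$ and replacing $h^{(i)}$ on each piece by its extremal constant value there. First I would invoke the law of total expectation with respect to the partition: since $\{Z_1,\hdots,Z_N\}$ covers $\supp{\bnn{x}}$, we can write
\begin{align*}
    \E{}{h^{(i)}(\bnn{x})} = \sum_{j=1}^N \condE{}{h^{(i)}(\bnn{x})}{\bnn{x}\in Z_j}\,\Prob{}{\bnn{x}\in Z_j},
\end{align*}
dropping from the sum any $Z_j$ with zero probability (so conditioning is well-defined). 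The only integrability requirement is that $h^{(i)}$ be integrable against the pushforward law of $\bnn{x}$, which holds since $h^{(i)}=\softmax^{(i)}$ is bounded in $[0,1]$ (and the argument goes through for any integrable $h$).

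The key step is then the pointwise bound on each conditional expectation. On the event $\{\bnn{x}\in Z_j\}$ we have, almost surely, $\min_{\prez{}\in Z_j} h^{(i)}(\prez{}) \leq h^{(i)}(\bnn{x}) \leq \max_{\prez{}\in Z_j} h^{(i)}(\prez{})$ — here the min and max are attained because $h^{(i)}$ is continuous and we may take the $Z_j$ to be compact (or, if some $Z_j$ is unbounded, the infimum/supremum of the bounded function $h^{(i)}$ is still finite and the inequality holds with $\inf$/$\sup$ in place of $\min$/$\max$). Taking conditional expectations preserves these inequalities, so
\begin{align*}
    \big[\min_{\prez{}\in Z_j} h^{(i)}(\prez{})\big] \leq \condE{}{h^{(i)}(\bnn{x})}{\bnn{x}\in Z_j} \leq \big[\max_{\prez{}\in Z_j} h^{(i)}(\prez{})\big].
\end{align*}
Multiplying by the nonnegative weight $\Prob{}{\bnn{x}\in Z_j}$ and summing over $j$ yields both claimed inequalities; the interpretation as relaxing the conditional law on $Z_j$ by a Dirac delta at the extremal point is exactly this constant bound.

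I do not expect a genuine obstacle here: the argument is a routine law-of-total-expectation plus monotonicity of (conditional) expectation. The only points needing a little care are (i) handling sets $Z_j$ of zero probability, which are simply excluded from the sum without affecting either side, and (ii) the possibly unbounded region in the partition (present whenever $\supp{\bnn{x}}$ is unbounded, as it is for a Gaussian $q$), where one should state the bound with $\inf_{\prez{}\in Z_j} h^{(i)}$ and $\sup_{\prez{}\in Z_j} h^{(i)}$ — both finite since $0\le h^{(i)}\le 1$ — and note that for the softmax these suprema are still efficiently computable because $\softmax^{(i)}$ is monotone coordinatewise, so its extrema over a hyper-rectangle are attained at a vertex (or a limit thereof). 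Combining these observations completes the proof.
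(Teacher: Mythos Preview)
Your proposal is correct and follows essentially the same approach as the paper: apply the law of total expectation over the partition and then bound each conditional expectation by the extremal value of $h^{(i)}$ on $Z_j$ (the paper phrases this as replacing the conditional distribution by a Dirac delta at the min/max). Your additional remarks on zero-probability pieces and on using $\inf/\sup$ for the unbounded region are refinements beyond what the paper spells out, but do not change the argument.
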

A particularly simple case is when there are only two sets in the partition of the BNN's output layer. Then, the following corollary of Proposition~\ref{prop:decisionLayer} guarantees that, similarly to deterministic NNs \citep{zhang2018efficient}, we can determine adversarial robustness by simply looking at the logit. 
\begin{corollary}\label{corol:advRobustnessClass}
    Let $\{[\check{\prez{}},\hat{\prez{}}],Z\}\subseteq\mathbb{R}^{n_{K+1}}$ be a partition of $\supp{\bnn{}}$. Then, for $i,j\in\{1,\hdots,n_{K+1}\}$ and $\bm{w}\sim q(\cdot)$, it holds that
    \begin{align*}
         &e^{\hat{\prez{}}^{(j)}} - e^{\check{\prez{}}^{(i)}} + (\frac{1}{\Prob{}{\bnn{}(x)\in [\check{\prez{}},\hat{\prez{}}]}}-1) \sum_{l=1}^{n_{K+1}} e^{\hat{\prez{}}^{(l)}} \leq 0\\
         &\implies \E{}{\softmax^{(j)}(\bnn{}(x))-\softmax^{(i)}(\bnn{}(x))} \leq 0.
    \end{align*}
\end{corollary}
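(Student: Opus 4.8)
The plan is to apply Proposition~\ref{prop:decisionLayer} with the two-element partition $\{[\check{\zeta},\hat{\zeta}], Z\}$, where $Z = [\check{\zeta},\hat{\zeta}]^C$, to the difference $h^{(j)} - h^{(i)}$ with $h = \softmax$, and then bound the pieces of the resulting sum using the fact that $\softmax^{(l)}(\zeta) = e^{\zeta^{(l)}}/\sum_m e^{\zeta^{(m)}} \in (0,1)$. Concretely, since the softmax difference is linear in the $\softmax^{(l)}$, the upper bound from Proposition~\ref{prop:decisionLayer} gives
\[
\E{}{\softmax^{(j)}(\bnn{x}) - \softmax^{(i)}(\bnn{x})} \leq \big[\max_{\zeta\in[\check\zeta,\hat\zeta]}(\softmax^{(j)} - \softmax^{(i)})(\zeta)\big]\,p_1 + \big[\max_{\zeta\in Z}(\softmax^{(j)} - \softmax^{(i)})(\zeta)\big]\,p_2,
\]
where $p_1 = \Prob{}{\bnn{x}\in[\check\zeta,\hat\zeta]}$ and $p_2 = 1 - p_1$.

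The key steps are then: (i) On the bounded box $[\check\zeta,\hat\zeta]$, bound $\max(\softmax^{(j)} - \softmax^{(i)})$ from above. Here I would use the crude but clean estimates $\softmax^{(j)}(\zeta) \le e^{\zeta^{(j)}}/\sum_m e^{\zeta^{(m)}}$ and drop the normalizer loosely: the cleanest route is to note $\softmax^{(j)}(\zeta) - \softmax^{(i)}(\zeta) = (e^{\zeta^{(j)}} - e^{\zeta^{(i)}})/\sum_m e^{\zeta^{(m)}}$, and on the box $e^{\zeta^{(j)}} \le e^{\hat\zeta^{(j)}}$, $e^{\zeta^{(i)}} \ge e^{\check\zeta^{(i)}}$, while the denominator is at least $1$ after an appropriate reduction — actually, more carefully, one wants $\softmax^{(j)} - \softmax^{(i)} \le e^{\hat\zeta^{(j)}} - e^{\check\zeta^{(i)}}$ when this quantity is being compared against a normalized expression; I expect the intended bound simply uses $\softmax^{(j)}(\zeta) \le e^{\hat\zeta^{(j)}}$ (valid since $\softmax^{(j)} < 1 \le$ ... no) — see the obstacle paragraph. (ii) On the unbounded complement $Z$, bound $\max(\softmax^{(j)} - \softmax^{(i)}) \le \max \softmax^{(j)} < 1$; but we need something tighter, namely $\le \sum_l e^{\hat\zeta^{(l)}}$... again this is where the normalization subtlety enters. (iii) Substitute $p_2 = 1 - p_1$, collect the coefficient of $(1/p_1 - 1)$ after dividing through, and conclude that if the displayed left-hand side is $\le 0$ then the expected softmax gap is $\le 0$.

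The main obstacle — and the step I would spend the most care on — is item (i)/(ii): getting the softmax bounds into exactly the form $e^{\hat\zeta^{(j)}} - e^{\check\zeta^{(i)}}$ and $\sum_l e^{\hat\zeta^{(l)}}$ requires exploiting that the denominator $\sum_m e^{\zeta^{(m)}}$ on the box is bounded below (by $\sum_m e^{\check\zeta^{(m)}}$, or after a monotone rescaling one can assume WLOG it equals a convenient value), and that on $Z$ one can use $\softmax^{(j)} - \softmax^{(i)} \le e^{\hat\zeta^{(j)}}/1$ only after arguing the relevant logits in the "good" class region keep the denominator large. The clean way is: divide the target inequality through by $p_1$, recognize $e^{\hat\zeta^{(j)}} - e^{\check\zeta^{(i)}}$ as an upper bound on $\mathbb{E}[\softmax^{(j)}-\softmax^{(i)}\mid \bnn{x}\in[\check\zeta,\hat\zeta]]$ scaled so the softmax normalizer is treated as $\ge 1$ WLOG (rescaling all logits by subtracting a constant does not change softmax), and $\sum_l e^{\hat\zeta^{(l)}}$ as an upper bound on the unnormalized mass, which dominates $\mathbb{E}[\softmax^{(j)}\mid \bnn{x}\in Z]$ after the same normalization — then Proposition~\ref{prop:decisionLayer} plus $p_2/p_1 = 1/p_1 - 1$ finishes it. I would present this as: assume WLOG (by translation invariance of softmax) that $\min_m \zeta^{(m)}$ over the box is $0$ so the box-normalizer is $\ge 1$, apply Proposition~\ref{prop:decisionLayer}, bound each max, and rearrange.
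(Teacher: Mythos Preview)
Your overall skeleton is right: apply Proposition~\ref{prop:decisionLayer} with the two-element partition, write
\[
\E{}{c(\bnn{x})} \;\le\; p_1\cdot \max_{\zeta\in[\check\zeta,\hat\zeta]} c(\zeta) \;+\; p_2\cdot \max_{\zeta\in Z} c(\zeta),
\qquad c(\zeta)\coloneqq\softmax^{(j)}(\zeta)-\softmax^{(i)}(\zeta),
\]
and use $p_2/p_1 = 1/p_1 - 1$. Where you go off track is in locating the term $\sum_l e^{\hat\zeta^{(l)}}$. It does \emph{not} come from bounding $c$ on the unbounded complement $Z$; on $Z$ the paper simply uses the trivial bound $c\le 1$ (since a softmax difference always lies in $[-1,1]$). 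That already gives the sufficient condition
\[
\max_{\zeta\in[\check\zeta,\hat\zeta]} c(\zeta)\;\le\; -\eta,
\qquad \eta\coloneqq \frac{1-p_1}{p_1}=\frac{1}{p_1}-1.
\]
The term $\sum_l e^{\hat\zeta^{(l)}}$ arises instead from bounding the \emph{denominator} of $c(\zeta)=\dfrac{e^{\zeta^{(j)}}-e^{\zeta^{(i)}}}{\sum_l e^{\zeta^{(l)}}}$ on the box. Observe that for a ratio with negative numerator and positive denominator, the maximum is attained at the largest numerator and the largest denominator simultaneously; hence, provided $e^{\hat\zeta^{(j)}}-e^{\check\zeta^{(i)}}<0$,
\[
\max_{\zeta\in[\check\zeta,\hat\zeta]}
\frac{e^{\zeta^{(j)}}-e^{\zeta^{(i)}}}{\sum_l e^{\zeta^{(l)}}}
\;\le\;
\frac{e^{\hat\zeta^{(j)}}-e^{\check\zeta^{(i)}}}{\sum_l e^{\hat\zeta^{(l)}}}.
\]
Clearing the positive denominator in the resulting inequality $\dfrac{e^{\hat\zeta^{(j)}}-e^{\check\zeta^{(i)}}}{\sum_l e^{\hat\zeta^{(l)}}}\le -\eta$ gives exactly the displayed hypothesis of the corollary.

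Your proposed translation-invariance trick (shift logits so the box-normalizer is $\ge 1$) is not needed and, as you noticed yourself, does not cleanly land on the stated formula: you end up trying to interpret $\sum_l e^{\hat\zeta^{(l)}}$ as a bound on the complement piece, which it is not. Drop that route and use the ratio bound above; the rest of your outline then goes through in two lines.
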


\section{BNN-DP Algorithm}\label{sec:Algo}
We summarize our overall procedure to solve Problem \ref{prob:MainProblem} in an algorithm called BNN-DP. Algorithm \ref{al:classification} presents BNN-DP for the classification setting; the procedure for the regression setting follows similarly and is provided in Appendix \ref{appen:Algorithms}. 
Algorithm \ref{al:classification} consists of a forward pass to partition the latent space of the BNN (Lines 2-4), and a backward pass to recursively approximate the value functions via Eqns.~\eqref{Eqn:FinalConditionValueIteration} and \eqref{Eqn:MiddleCOnditionValueIteration} (Lines 7-10). 
The last layer of the BNN (Eqn.~\ref{Eqn:FinalConditionValueIteration}) is handled by the \texttt{IBPSoftmax} function in Line 7 using the results of Proposition \ref{prop:decisionLayer}. The \texttt{BP} function in Line 9 performs the back-propagation over the hidden layers of the BNN (Eqn.~\ref{Eqn:MiddleCOnditionValueIteration}) using the results of Lemma \ref{lemma:closedFormProbRect} and Proposition \ref{prop:nnCondExpect} and \ref{prop:boundOuterSpace}. 
The detailed procedures of \texttt{IBPSoftmax} and \texttt{BP} can be found in Appendix \ref{appen:Algorithms}. 
In what follows, we describe how we partition the support of the latent space of the BNN, \new{and discuss the computational complexity of BNN-DP.}

\paragraph{Partitioning}
Recall that our results rely on hyper-rectangular partitions. Hence, 
for each layer $k$, we employ the following proposition to find a hyper-rectangular subset of the support of each layer that captures at least $1 - \epsilon$ of the probability mass of $\supp{f^{\bm{w}}_{0:k}}$. 
\begin{proposition}\label{prop:FindRectForProbBounds}
    For $k\in\{1,\hdots,K\}$,
    let $\epsilon\in[0,1]$ be a constant, and $Z\subset \mathbb{R}^{n_{k-1}}$ be a compact set.
    Then, for vectors $\check{\prez{}}_k,\hat{\prez{}}_k\in\mathbb{R}^{n_k}$ defined such that $\forall i\in\{1,\hdots,n_k\}$,
    \begin{align}
        \check{\prez{}}_k^{(i)} &= \max_{z\in Z} \left[\inverf{-\eta}\sqrt{2s_k^{(i)}(z)}+m_k^{(i)}(z) \right], \label{eq:LowerVecMainPartition} \\
        \hat{\prez{}}_k^{(i)} &= \min_{z\in Z} \left[\inverf{\eta}\sqrt{2s_k^{(i)}(z)}+m_k^{(i)}(z) \right], \label{eq:UpperVecMainPartition}
    \end{align}
    \new{where $\eta = (1-\epsilon)^{\frac{1}{n_k}}$}, 
    it holds that, $\forall \z{}\in Z$,
    $$\Prob{\bprez{}\sim \nDist{m_k(\z{})}{\diag{s_k(\z{})}}}{\bprez{}\in [\check{\prez{}}_k,\hat{\prez{}}_k]} \geq 1-\epsilon.$$
\end{proposition}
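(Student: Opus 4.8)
The plan is to decouple the joint probability event $\bprez{}\in[\check{\prez{}}_k,\hat{\prez{}}_k]$ across coordinates using the fact that, conditional on $\z{}$, the random vector $\bprez{}$ is Gaussian with \emph{diagonal} covariance $\diag{s_k(\z{})}$, hence its components are independent. First I would write, for any fixed $\z{}\in Z$,
\begin{equation*}
\Prob{\bprez{}\sim \nDist{m_k(\z{})}{\diag{s_k(\z{})}}}{\bprez{}\in [\check{\prez{}}_k,\hat{\prez{}}_k]} = \prod_{i=1}^{n_k} \Prob{}{\bprez{}^{(i)}\in [\check{\prez{}}_k^{(i)},\hat{\prez{}}_k^{(i)}]},
\end{equation*}
where each factor is a one-dimensional Gaussian probability with mean $m_k^{(i)}(\z{})$ and variance $s_k^{(i)}(\z{})$ (this is exactly the $n_k=1$ instance of Lemma~\ref{lemma:closedFormProbRect}, or a direct computation with the error function).

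Next I would show each factor is at least $\eta = (1-\epsilon)^{1/n_k}$, so that the product is at least $\eta^{n_k} = 1-\epsilon$. Fix $i$ and abbreviate $\mu = m_k^{(i)}(\z{})$, $\sigma = \sqrt{s_k^{(i)}(\z{})}$. Using the closed form, the $i$th factor equals $\tfrac12\bigl(\erf{(\hat{\prez{}}_k^{(i)}-\mu)/(\sigma\sqrt2)} - \erf{(\check{\prez{}}_k^{(i)}-\mu)/(\sigma\sqrt2)}\bigr)$. By the definitions~\eqref{eq:UpperVecMainPartition} and~\eqref{eq:LowerVecMainPartition}, taking the min (resp. max) over $z\in Z$ guarantees that for this particular $\z{}$ we have $\hat{\prez{}}_k^{(i)} \leq \inverf{\eta}\sqrt{2}\sigma + \mu$ and $\check{\prez{}}_k^{(i)} \geq \inverf{-\eta}\sqrt{2}\sigma + \mu = -\inverf{\eta}\sqrt{2}\sigma + \mu$; wait, the inequality direction needs care. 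Since $\hat{\prez{}}_k^{(i)}$ is a \emph{minimum} over $Z$, we get $\hat{\prez{}}_k^{(i)} \le \inverf{\eta}\sqrt{2s_k^{(i)}(\z{})}+m_k^{(i)}(\z{})$, i.e. $(\hat{\prez{}}_k^{(i)}-\mu)/(\sigma\sqrt2) \le \inverf{\eta}$, which would push the $\erf$ term \emph{down}, not up. So I must instead observe that the bounds are chosen conservatively in the other direction: $\hat\zeta$ is the min so the upper endpoint is never larger than the pointwise-optimal one, which shrinks the interval — that is the wrong way. Let me recheck: actually the correct reading is that we need the interval $[\check{\prez{}}_k^{(i)},\hat{\prez{}}_k^{(i)}]$ to be \emph{contained} in the pointwise interval only if that enlarges probability; since probability of a sub-interval is smaller, we instead need $\check\zeta_k^{(i)}$ small enough and $\hat\zeta_k^{(i)}$ large enough. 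Hence the $\max$ in~\eqref{eq:LowerVecMainPartition} and $\min$ in~\eqref{eq:UpperVecMainPartition} must be producing, respectively, an \emph{upper} bound on a quantity that should be $\le \check\zeta$ — so one of the $\inverf{}$ arguments must be negative, making the whole expression the correct-signed endpoint. Indeed $\inverf{-\eta} = -\inverf{\eta} < 0$, so~\eqref{eq:LowerVecMainPartition} reads $\check\zeta_k^{(i)} = \max_z[\,-\inverf{\eta}\sqrt{2s}+m\,]$, and we need $\check\zeta_k^{(i)} \le -\inverf{\eta}\sqrt{2s_k^{(i)}(\z{})}+m_k^{(i)}(\z{})$ for all $\z{}$ — but $\max$ gives the reverse. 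This tension is the crux, and resolving the sign bookkeeping correctly (likely via the monotonicity of $x\mapsto \inverf{x}$ and the fact that the relevant endpoints are the $\min$/$\max$ precisely so that the guaranteed interval is the \emph{intersection} of all pointwise $(1-\epsilon)^{1/n_k}$-intervals, which still has mass $\ge\eta$ per coordinate by a direct $\erf$ monotonicity argument) is what I expect to be the main obstacle.

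So concretely, after the sign analysis I would argue: for the fixed $\z{}$, the one-dimensional interval $[-\inverf{\eta}\sqrt{2s_k^{(i)}(\z{})}+m_k^{(i)}(\z{}),\ \inverf{\eta}\sqrt{2s_k^{(i)}(\z{})}+m_k^{(i)}(\z{})]$ is exactly the symmetric interval around $\mu$ carrying Gaussian mass $\eta$ (since $\Prob{}{|\bprez{}^{(i)}-\mu| \le \inverf{\eta}\sqrt2\sigma} = \erf{\inverf{\eta}} = \eta$). The constructed $[\check\zeta_k^{(i)},\hat\zeta_k^{(i)}]$ is chosen (via the $\max$/$\min$) to be contained in this interval for \emph{every} $\z{}\in Z$ simultaneously only if the geometry works out; if instead it is the enveloping interval, the per-$\z{}$ mass is $\ge\eta$ directly. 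I would pin down which case holds from the exact $\max$/$\min$ placement, then conclude $\Prob{}{\bprez{}^{(i)}\in[\check\zeta_k^{(i)},\hat\zeta_k^{(i)}]} \ge \eta$ for each $i$ and each $\z{}\in Z$, and multiply over $i$ using independence to get $\ge \eta^{n_k} = 1-\epsilon$, which holds uniformly in $\z{}\in Z$ as required. The only analytic facts used are: Gaussian coordinate independence under diagonal covariance, the closed-form Gaussian box probability (Lemma~\ref{lemma:closedFormProbRect}), monotonicity of $\erf$ and $\inverf{}$, and the identity $\erf{\inverf{\eta}}=\eta$.
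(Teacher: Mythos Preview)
Your overall strategy---factorize over coordinates using the diagonal covariance, lower-bound each univariate factor by $\eta=(1-\epsilon)^{1/n_k}$ via the error-function closed form, then multiply---is exactly the paper's argument. Your unresolved ``sign bookkeeping'' is not a gap in your reasoning but reflects a genuine issue with the proposition as written: the $\max$ and $\min$ in~\eqref{eq:LowerVecMainPartition}--\eqref{eq:UpperVecMainPartition} appear to be swapped.

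Concretely, for a fixed $\z{}\in Z$ the symmetric interval about $m_k^{(i)}(\z{})$ carrying Gaussian mass exactly $\eta$ is $[L_i(\z{}),U_i(\z{})]$ with $L_i(\z{})=\inverf{-\eta}\sqrt{2s_k^{(i)}(\z{})}+m_k^{(i)}(\z{})$ and $U_i(\z{})=\inverf{\eta}\sqrt{2s_k^{(i)}(\z{})}+m_k^{(i)}(\z{})$. The stated definitions give $\check{\prez{}}_k^{(i)}=\max_{\z{}\in Z}L_i(\z{})$ and $\hat{\prez{}}_k^{(i)}=\min_{\z{}\in Z}U_i(\z{})$, i.e.\ the \emph{intersection} of the per-$\z{}$ intervals; for each fixed $\z{}$ the constructed interval is therefore \emph{contained in} $[L_i(\z{}),U_i(\z{})]$ and hence carries mass at most $\eta$, not at least $\eta$. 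It can even be empty: take $n_k=1$, $\epsilon=0.05$, and two points $z_1,z_2\in Z$ with $m_k^{(1)}(z_1)=0$, $m_k^{(1)}(z_2)=10$, $s_k^{(1)}\equiv 1$. The paper's own proof makes the same swap: from requiring $\erf{(\hat{\prez{}}^{(i)}-m_k^{(i)}(\z{}))/\sqrt{2s_k^{(i)}(\z{})}}\ge\eta$ for \emph{all} $\z{}\in Z$ one obtains $\hat{\prez{}}^{(i)}\ge U_i(\z{})$ for all $\z{}$, hence $\hat{\prez{}}^{(i)}\ge\max_{\z{}\in Z}U_i(\z{})$, not $\min$. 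With $\max$ and $\min$ interchanged in~\eqref{eq:LowerVecMainPartition}--\eqref{eq:UpperVecMainPartition} (so that $[\check{\prez{}}_k^{(i)},\hat{\prez{}}_k^{(i)}]\supseteq[L_i(\z{}),U_i(\z{})]$ for every $\z{}\in Z$), your argument and the paper's go through directly by monotonicity of the error function.
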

\new{Here, Eqns~\eqref{eq:LowerVecMainPartition} and \eqref{eq:UpperVecMainPartition} are convex minimization problems, which can be efficiently solved via, e.g., the gradient descent algorithm.}
We denote the resulting region obtained via Proposition~\ref{prop:FindRectForProbBounds} as $Z_{k,main}\subset \supp{f^{\bm{w}}_{0:k}}$. Then, $Z_{k,main}$ can be further refined by interval splitting.

\paragraph{Computational Complexity}
\new{Similarly as for linear bounding procedures for deterministic neural networks, see e.g. [Zhang et al. 2018], the cost of computing piecewise-affine relaxations of a BNN with $K$ layers and $n$ neurons per layer is polynomial in both $K$ and $n$. 
Refinement, which is not part of the main algorithm, has exponential cost in $n$. In practice, however, in NNs and consequently in BNNs, only a few neurons are generally active, and those are the ones that most influence the posterior \citep{frankle2018lottery}. Therefore, the refining procedure can focus only on these neurons. Because of this, in almost all the experiments in Section \ref{sec:ExperimentalResults}, only 
$2$ regions in the partition per hidden layer were required to certify robustness, even in cases where the BNN had large posterior variance and thousands of neurons.}

        

\section{Experimental Results}\label{sec:ExperimentalResults}
\begin{table*}[t]
    \centering
    \caption{Comparison between BNN-DP and FL on various \new{fully connected} BNN architectures, with $K$ being the number of hidden layers, and $n_{hid}$ the number of neurons per layer.
    The results are the average over $100$ test point, and the computation times are averaged over all architectures.
    The best values for each comparison are reported in bold. 
    \\
    }
    \resizebox{0.50\textwidth}{!}{
    \subfloat[\Large{$\gamma$-Robustness Regression Tasks}]{
    \begin{tabular}{ccc|rrr|rrr}\toprule
        \multicolumn{3}{c|}{} &\multicolumn{3}{c|}{2D Noisy Sine} &  \multicolumn{3}{c}{Kin8nm} \\ \midrule
        $K$ & $\epsilon$ & $n_{hid}$ & BNN-DP & FL (5 std) & FL (3 std) & BNN-DP  & FL (5 std) & FL (3 std) \\ \midrule
        1 &1e-2   &       64 &  \textbf{0.041} & 1.8 &    0.8 & \textbf{0.044}  &0.7&      0.3 \\
          &       &       256&   \textbf{0.04} & 3.0 &    1.2 & \textbf{0.040}  &45.4&    24.7 \\
          &       &      512 &  \textbf{0.039} & 6.4&    2.5 & \textbf{0.041}  &12.6&     6.0 \\
        2 &1e-3   &       64 &  \textbf{0.109} & 718.1 &  101.1 &\textbf{0.070}  &31.3&    10.8 \\
         &        &      128 &  \textbf{0.239} &112.2&   20.1 & \textbf{0.240}  &1459.8&   420.9 \\
         &        &      256 &  \textbf{0.376} &599.3&   92.1 & \textbf{0.968}  &9420.9&  2715.9 \\
        \new{3} &5e-4   &       64 &  \textbf{0.477} &699.2 & 74.8 &\textbf{0.348}  &12638.5& 59304.6\\
        &        &      128 & \textbf{0.629} & 11214.4 & 1142.8 &\textbf{0.964}  &433149.4& 232811.6 \\
         &        &     256  & \textbf{14.180} & 275408.1 & 2882.3 &\textbf{69.488}  &3441470.8 & 21877545.4 \\ \midrule
        \multicolumn{3}{c|}{Cmp. Time (sec.)} & 8.0 & 7.8   & \textbf{7.7} & 13.2  & \textbf{7.5} & 7.6  \\ \bottomrule
        \label{tab:regression}
    \end{tabular}
    }}
    \resizebox{0.492\textwidth}{!}{
     \subfloat[\Large{$\epsilon$-Robustness Classification Tasks}]{
        \begin{tabular}{cc|ccc|ccc}\toprule
        \multicolumn{2}{c|}{}& \multicolumn{3}{c|}{MNIST}& \multicolumn{3}{c}{Fashion MNIST}\\ \midrule
         $K$ & $n_{hid}$ & BNN-DP & FL (5 std) & FL (3 std)& BNN-DP & FL (5 std) & FL (3 std) \\ \midrule 
            1   &       64  &   \textbf{0.0150} & 0.0090  & 0.0102 & \textbf{0.0128} & 0.0077 &   0.008 \\
                &       128 & \textbf{0.0145} & 0.0091 & 0.0131 & \textbf{0.0065} & 0.0041 & 0.0045\\
                &      256  &  \textbf{0.0137} & 0.0082 &   0.0090 & \textbf{0.0081} & 0.0043 &  0.0046\\
                &      512  & \textbf{0.0131} & 0.0070 & 0.0073 & \textbf{0.0092} & 0.0044 &  0.0048\\
            2   &       64  &  \textbf{0.0073} & 0.0041 &  0.0042 & \textbf{0.0048} & 0.0024 &  0.0026 \\
                &       128 & \textbf{0.0062} & 0.0028 & 0.0035 & \textbf{0.0021} & 0.0018 & 0.0019 \\
                &      256  &  \textbf{0.0049} & 0.0023 &  0.0023 & \textbf{0.0032} & 0.0016 &  0.0016\\
            \new{3}   &      64 &  \textbf{0.0032} & 0.0014 & 0.0016 & \textbf{0.0015} & 0.0006 & 0.0008 \\
                &     256 &  \textbf{0.0018} & 0.0009 & 0.0009 & \textbf{0.0007} & 0.0006 & \textbf{0.0007} \\ \midrule
        \multicolumn{2}{c|}{Cmp. Time (sec)} & \textbf{15.2}  & 859.2 & 805.3 & \textbf{22.1} & 767.6 & 760.2 \\
        \bottomrule
        \label{tab:classification}
    \end{tabular}
    }} 
\end{table*}

We empirically evaluated BNN-DP on various regression and classification benchmarks. We ran our experiments on an AMD EPYC 7252 8-core CPU and train the BNNs using Noisy Adam \citep{zhang2018noisy} and variational online Gauss-Newton \citep{khan2018fast}.
We first validate the bounds obtained by BNN-DP for BNNs trained on samples from an 1D sine with additive noise (referred to as the 1D Noisy Sine). We then analyse a set of BNNs with various architectures trained on the 2D dimensional equivalent of 1D Noisy Sine and the Kin8nm dataset.\footnote{Available at \url{http://www.cs.toronto.edu/~delve}.} The latter dataset contains state-space readings for the dynamics of an 8 link robot arm, and is commonly used as a regression task to benchmark BNNs \citep{hernandez2015probabilistic,gal2016dropout}. 
Last, we turn our attention to classification and evaluate BNNs trained on the MNIST, Fashion MNIST \new{and CIFAR-10}
datasets. \footnote{\new{Our code is available at \url{https://github.com/sjladams/BNN_DP}.}}

As a baseline for our experiments, we consider the state-of-the-art approach of \citet{berrada2021make}, to which we refer as ``FL''. In fact, FL is the only existing method that can provide robustness certification for BNNs in similar settings as our BNN-DP.
Nevertheless, we must remark that even FL is not fully formal; 
it works by truncating the Gaussian posterior distribution associated to each weight at a given multiple of its standard deviation (std), disregarding a large part of the posterior distribution.  
Hence, the returned bound is not sound over the full posterior but only a subset of it.
More importantly, the disregarded portion of the posterior grows exponentially with the number of weights of the networks. 
Already for a two hidden layer BNN with 48 neurons per layer, FL verifies only $0.1\%$ of the BNN posterior when truncated at 3 std.
Thus, the bounds computed by FL are optimistic and not mathematically guaranteed to hold. 
In contrast, not only BNN-DP returns formal bounds accounting for the whole posterior, but also the benchmark results show that BNN-DP bounds are much tighter than FL ones.

\begin{figure*}[t]
    \centering
    \begin{subfigure}{0.245\textwidth}
        \centering
        \includegraphics[width=0.99\textwidth]{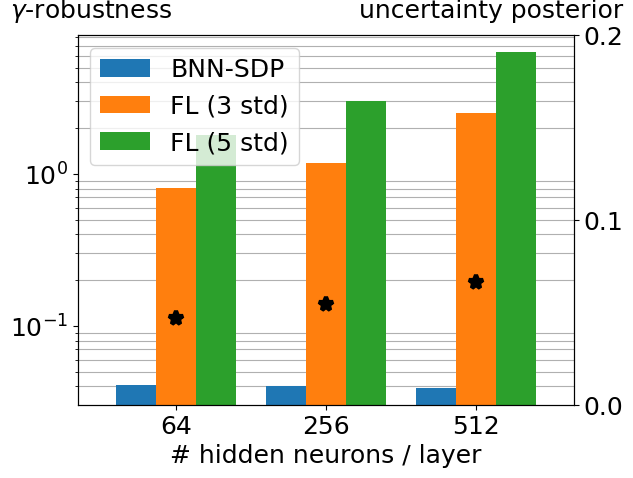}
        \caption{1 hidden layer - 2D NS}
        \label{fig:Noisy_sine_2_fc1}
    \end{subfigure}
    \begin{subfigure}{0.245\textwidth}
        \centering
        \includegraphics[width=0.99\textwidth]{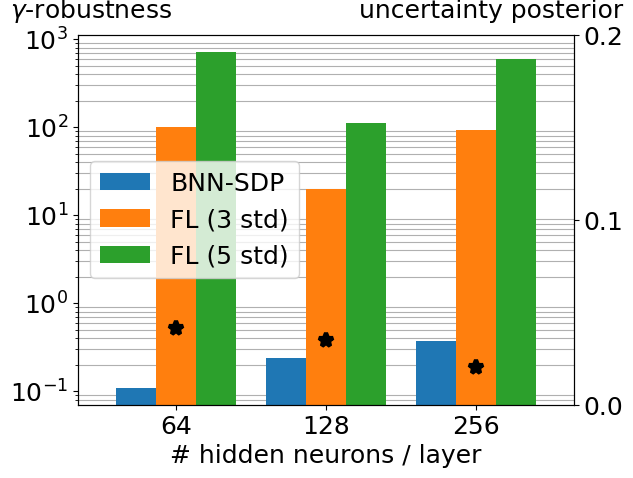}
        \caption{2 hidden layers - 2D NS}
        \label{fig:Noisy_sine_2_fc2}
    \end{subfigure}
    \begin{subfigure}{0.245\textwidth}
        \centering
        \includegraphics[width=0.99\textwidth]{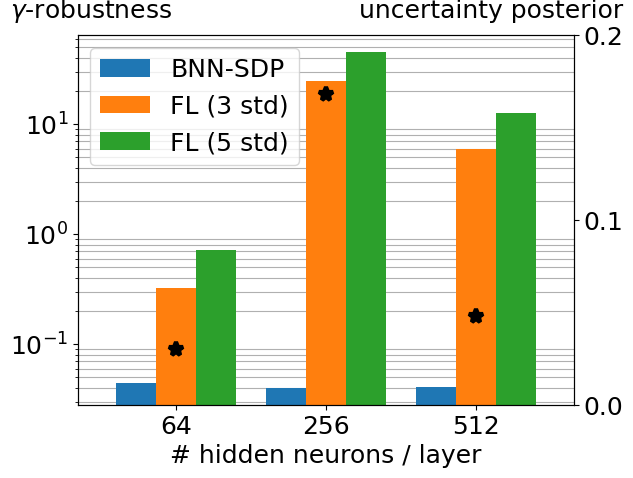}
        \caption{1 hidden layer Kin8nm}
        \label{fig:Kin8nm_fc1}
    \end{subfigure}
    \begin{subfigure}{0.245\textwidth}
        \centering
        \includegraphics[width=0.99\textwidth]{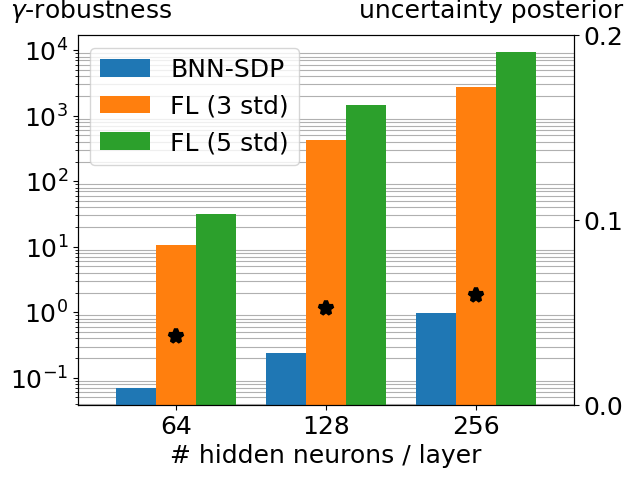}
        \caption{2 hidden layers Kin8nm}
        \label{fig:Kin8nm_fc2}
    \end{subfigure}
    \caption{Analysis of $\gamma$-robustness illustrated as bars (in log-scale), and uncertainty of the posterior distribution (black stars), for BNNs architectures trained on 2D Noisy Sine (NS) and Kin8nm. 
    }
    \label{fig:visualisationTable}
    \ifthenelse{\boolean{arxivFormat}}{\vspace{-2mm}}{}
\end{figure*}
\subsection{Bound Validation}
We validate and qualitatively compare the bounds obtained by BNN-DP and FL on BNNs with 1 and 2 hidden layers trained on 1D Noisy Sine. 
The results of these analyses are reported in Figure \ref{fig:noisy_sine_1}.
Visually, we see that BNN-DP is able to compute tight affine relaxations (blue lines) on the mean of the BNNs over the grey shaded intervals.
In contrast, already in this simple scenario, and even when truncating the posterior distribution at just 1 std, FL returns as guaranteed output intervals $[-1.68, 0.59]$ and $[0.23, 0.71]$ for the 1 and 2 hidden layer BNN, respectively.
Hence, even though FL disregards most of the BNNs posterior, BNN-DP still produces tighter bounds.
When using 3 std, the FL interval bounds become even wider, that is $[-7.07, 9.31]$ and $[-0.65, 1.54]$, for the 1 and 2 hidden layer BNN, respectively.
Intuitively, the major improvement of the bounds can be explained by the fact that, while BNN-DP directly averages the uncertainty of each layer by solving the DP in Theorem \ref{Theorem:ValueIteration}, FL solves an overall optimisation problem that at each layer considers the worst combination of parameters in the support of the (truncated) distribution, leading to conservative bounds. 
In fact, the bound computed by FL is looser in the one-hidden layer case than in the two-hidden layers one by one order of magnitude, precisely because of the higher variance of the former BNN compared to the second. 
In what follows, we see that analogous observations apply to more complex deep learning benchmarks.

\subsection{Regression Benchmarks} 
We consider a set of BNNs with various architectures trained on the 2D Noisy Sine and Kin8nm regression datasets. To asses the certification performance of BNN-DP, we compute the difference between the upper and lower bounds on the expectation of the BNNs, referred to as the $\gamma$-robustness, for its input in a $\ell_{\infty}$-norm ball of radius $\epsilon$ centered at a sampled data point. Clearly, a smaller value of $\gamma$ implies a tighter bound computation.
Results, averaged over 100 randomly sampled test points, are reported in Table~\ref{tab:regression}. For all experiments, BNN-DP greatly improves the value of $\gamma$-robustness provided by the FL-baseline by 1 to 4 orders of magnitude with similar computation times. We also note that the larger the BNN is, the larger the improvement in the (tightness of the) bounds are, which empirically demonstrates the superior scalability of BNN-DP.  
Figure \ref{fig:visualisationTable} explicitly shows the impact of the model size and variance on the certified $\gamma$-robustness. For BNNs with 1 hidden layer, BNN-DP guarantees small $\gamma$-robustness (and hence tighter bounds) irrespective of the number of neurons as well as the amount of uncertainty. In contrast, as already observed for the 1D Noisy Sine case,  
FL is particularly impacted by the variance of the posterior distribution.
For BNNs with two hidden layers, BNN-DP requires partitioning the latent space, which leads to a positive correlation with the value of $\gamma$-robustness and the number of hidden neurons. A similar, but more extreme, trend is also observed for FL.

\newpage
\subsection{Classification Benchmarks}
We now evaluate BNN-DP on the MNIST, Fashion MNIST and \new{CIFAR-10} classification benchmarks. In order to quantitatively measure the robustness of an input point $x^*$, we consider the maximum radius $\epsilon$ for which the decisions on $\ell_{\infty}$-norm perturbations of $x^*$ with radii $\epsilon$ is invariant. That is, any perturbation of 
$x^*$ smaller than $\epsilon$ does not change the classification output; hence, the larger $\epsilon$ in $x^*$, the more robust the BNN in the specific point. 
Results are reported in Table \ref{tab:classification} \new{and \ref{tab:classificationForCNNS}}. 
\new{
For the fully connected BNN architectures, BNN-DP not only is able to certify a substantially larger $\epsilon$ compared to the baseline, but also it does so by orders of magnitude smaller computation time. 
}
This is because our approach uses interval relaxations (Proposition \ref{prop:decisionLayer}) to bound the softmax, whereas FL explicitly considers a non-convex optimization problem, which is computationally demanding. 
\new{For the Bayesian CNN architectures, FL is able to certify a slightly larger $\epsilon$, at the costs of magnitudes of orders increase of computation time. 
This can be explained by the decreasing support of the BNN posterior certified by FL for increasing network size, whereas, the $\epsilon$ certified by BNN-DP holds for the whole posterior.}

\begin{table}[h]
    \centering
    \vspace{-1mm}
    \caption{
    \new{Comparison of the $\epsilon$-robustness obtained with BNN-DP and FL for various BNN architectures, with $K_{conv}$ convolutional layers concatenated to $K$ fully connected hidden layers, and $n_{hid}$ neurons per fully connected layer. The convolutional layers have $n_{kern}$ kernels of size $4\times4$ with stride $1$. Inference on the convolutional and linear layers is performed using Dropout and Bayes by Backprop, respectively. 
    The results are the average over $100$ test points, and the computation times are averaged over all architectures.
    The best values for each comparison are reported in bold. }}
    \vspace{2mm}
    \resizebox{0.5\textwidth}{!}{
    \begin{tabular}{ccccc|ccc}\toprule 
         Dataset &$K_{conv}$ & $n_{kern}$ & $K$ & $n_{hid}$ & BNN-DP & FL (5 std) & FL (3 std)   \\ \midrule
         Fashion MNIST&1 & 2 & 1 & 64 & 0.00065 & 0.00112 & \textbf{0.00122}\\ 
         &2 & 2 & 1 & 64 & 0.00061 & 0.00109 & \textbf{0.00117} \\ \midrule
         &\multicolumn{4}{c|}{Cmp. Time (sec)} & \textbf{3.7} & 545.5 & 319.3 \\ \midrule
         CIFAR-10&2 & 4 & 0 & - & 0.00007 & 0.00009 & \textbf{0.00010}  \\ 
         &3 & 3 & 0 & - & 0.00011 & 0.00019 & \textbf{0.00021} \\ \midrule
         &\multicolumn{4}{c|}{Cmp. Time (sec)} & \textbf{1.8} & 250.7 & 201.2 \\ \midrule
    \end{tabular}
    }
    \label{tab:classificationForCNNS}
\end{table} 

\section{Conclusion}
We introduced BNN-DP, an algorithmic framework to certify adversarial robustness of BNNs. BNN-DP is based on a reformulation of adversarial robustness for BNNs as a solution of a dynamic program, for which efficient relaxations can be derived. Our experiments on multiple datasets for both regression and classification tasks show that our approach greatly outperforms state-of-the-art competitive methods, thus paving the way for applications of  BNNs in safety-critical applications.

\section*{Acknowledgements}
This work was supported in part by the NSF grant 2039062.

\bibliography{sample}
\bibliographystyle{icml2023}

\newpage
\appendix
\onecolumn


\section{Proofs Section \ref{sec:SDP}}\label{appen:ProofsSDPSection}
\subsection{Proof Theorem \ref{Theorem:ValueIteration}}\label{subsec:proofValueIteration}
    By the law of total expectation and because of the independence of the weights distribution at different layers it holds that
    \begin{align*}
    \E{\bm{w}\sim q(\cdot)}{h(\bnn{x})} &=\E{\bm{w}\sim q(\cdot)}{ \E{\bweight{K}\sim q(\cdot)}{h(\bweight{K}(\bnnUntil{0:K}{x}^T, 1)^T)}}\\
    &=  \E{\bm{w}\sim q(\cdot)}{\val{K}{\bnnUntil{0:K}{x}}} \\
    &=\E{\bm{w}\sim q(\cdot)}{\E{\bweight{K-1}\sim q(\cdot)}{\val{K}{\act{K}{\bweight{K-1}(\bnnUntil{0:K-1}{x}^T, 1)^T}}}}\\
    &= \E{\bm{w}\sim q(\cdot)}{\val{K-1}{\bnnUntil{0:K-1}{x}}} 
    \end{align*}
    Repeating this procedure backwards over the layers of the neural networks, we obtain $\E{\bm{w}\sim q(\cdot)}{h(\bnn{x})}=\val{0}{x}$.

\subsection{Corollary \ref{Corol:MinMaxValFunction}}
    If, for $k\in\{1,\hdots, K\}$, we have that $\forall z_k\in \mathbb{R}^{n_k}$, $\check{V}_k(z_k)\leq \val{k}{z_k}\leq \hat{V}_k(z_k)$. Then, for any probability density distribution $p:\mathbb{R}^{n_k}\rightarrow\mathbb{R}_{\geq0}$ it holds that
    $$
        \int_{\supp{p}}\check{V}_{k}(\z{})p(\z{})d\z{} \leq \int_{\supp{p}}\val{k}{\z{}}p(\z{})d\z{} \leq \int_{\supp{p}} \hat{V}_{k}(\z{})p(\z{})d\z{}, 
    $$
    or rewritten in terms of expectations,
    $$
        \E{\bz{}\sim p(\cdot)}{\check{V}_{k}(\bz{})} \leq \E{\bz{}\sim p(\cdot)}{\val{k}{\bz{}}} \leq \E{\bz{}\sim p(\cdot)}{\hat{V}_{k}(\bz{})}.
    $$
    Furthermore, by Theorem \ref{Theorem:ValueIteration}, $\check{V}_0(x)\leq \E{\bm{w}\sim q(\cdot)}{h(\bnn{x})} \leq \hat{V}_0(x)$. Consequently, for $i\in\{1,\hdots,l\}$, it holds that
    \begin{align*}
        \min_{x \in T} \E{\bm{w}\sim q(\cdot)}{h(\bnn{x})} \geq \min_{x \in T} \check{V}^{(i)}_0(x), \qquad \max_{x \in T} \E{\bm{w}\sim q(\cdot)}{h(\bnn{x})} \leq \max_{x \in T} \hat{V}^{(i)}_0(x).
    \end{align*}

\section{Proofs Section \ref{sec:SolveDynamicProgram}}
For the proof of Proposition \ref{prop:neuron} we rely on some properties of \textit{rectified Gaussian Distributions} \citep{harva2004hierarchical, winn2005variational, socci1997rectified} that we will first introduce below.

In the special case of $\phi_k$ being the ReLU function, the dynamics of the BNN over hidden layer $k$ can be described by the so-called \textit{rectified Gaussian Distribution}, as illustrated in Figure \ref{fig:nDist_vs_rectnDist}, and formally defined as follows. 
\begin{figure}[H]
    \centering
    \includegraphics[width=0.75\textwidth]{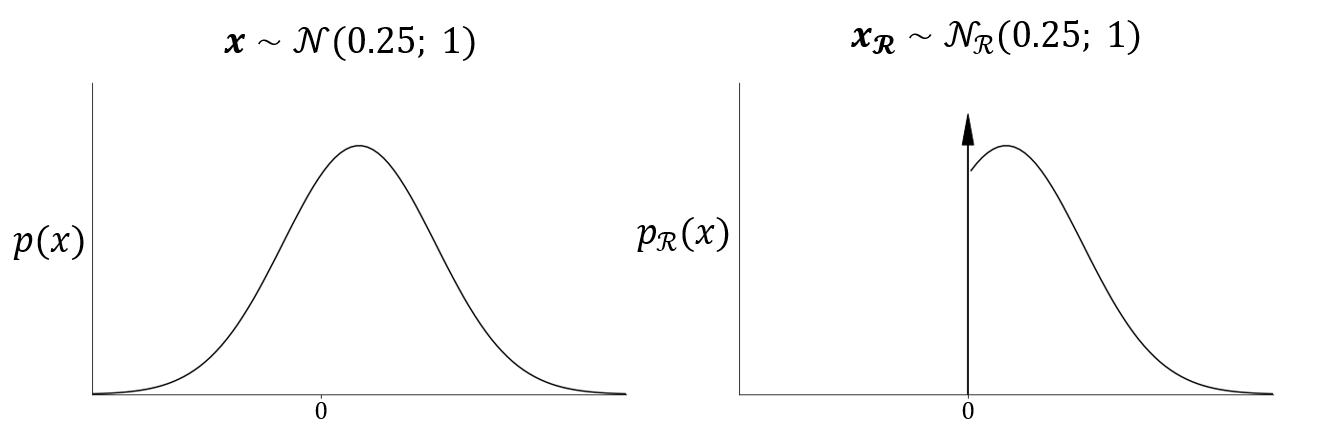}
    \caption{Gaussian probability density function and its related rectified version.}
    \label{fig:nDist_vs_rectnDist}
\end{figure}
\begin{definition}[Rectified Gaussian Distribution]\label{def:rectGaussian}
    Let $\bz{}$ be a random variable with Gaussian distribution $\nDist{\mu}{\sigma^2}$ with mean $\mu\in\mathbb{R}$ and standard deviation $\sigma\in\mathbb{R}_{\geq 0}$. Then, $\max(\bz{},0)$ is a rectified Gaussian random variable. 
\end{definition}
The probability density function (pdf) of rectified Gaussian distributions is obtained by a mixture of a discrete distribution at $\z{}=0$ and a trunctated Gaussian distribution with interval $(0,\infty)$, that is the distribution of $\max(\bz{},0)$ is:
$$ \rectnProbDens{z}{\mu}{\sigma^2} \coloneqq \nCDF{0}{\mu}{\sigma^2}\delta(z) + \nProbDens{z}{\mu}{\sigma^2}U(x), $$
where $\Phi:\mathbb{R}\rightarrow\mathbb{R}_{\geq 0}$ is the cdf of the standard normal distribution, $\delta$ is the Dirac delta function, and $U$ is the unit step function. 

There are various important properties of a rectified Gaussian random variable that we will rely on in this proof. First of all, there exists a closed-form expression for the expected value of rectified Gaussian distributions \citep{harva2004hierarchical}. 
Hence, in the case of $\phi_{k}$ being the ReLU function, we have a closed form-expression for Equation \eqref{eq:PropAffineVal}. Second, as a consequence of the convexity properties of the expectation of rectified Gaussian variables presented in the following two Lemmas, we can employ convexity to efficiently find an affine relaxation of Equation \eqref{eq:PropAffineVal}. 

\begin{lemma}\label{lemma:convexityExpectRectNormalSigmaMu}
    For $\mu\in\mathbb{R}$ and $\sigma\in\mathbb{R}_{\geq 0}$, it holds that  $\E{\bz{}\sim\rectnDist{\mu}{\sigma^2}}{\bz{}}$
    is convex w.r.t. $(\mu,\sigma)^T$.
\end{lemma}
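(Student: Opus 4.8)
The plan is to avoid differentiating the closed-form expression for $g$ given in Proposition~\ref{prop:neuron} (which would require a somewhat tedious check of positive semi-definiteness of its Hessian) and instead exploit the reparametrization of a Gaussian. By Definition~\ref{def:rectGaussian}, if $\bz{}\sim\nDist{\mu}{\sigma^2}$ then $\max(\bz{},0)$ is the rectified Gaussian; writing $\bz{}=\mu+\sigma\bm{\xi}$ with $\bm{\xi}\sim\nDist{0}{1}$ gives $\max(\bz{},0)=\pospart{\mu+\sigma\bm{\xi}}$, so that
\[
\E{\bz{}\sim\rectnDist{\mu}{\sigma^2}}{\bz{}} \;=\; \E{\bm{\xi}\sim\nDist{0}{1}}{\pospart{\mu+\sigma\bm{\xi}}} \;=:\; g(\mu,\sigma).
\]
The expectation on the right is finite since $\pospart{\mu+\sigma\xi}\le|\mu|+\sigma|\xi|$ and the standard normal has a finite first moment.

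First I would observe that, for each fixed realization $\xi\in\mathbb{R}$, the map $(\mu,\sigma)\mapsto\mu+\sigma\xi$ is affine on $\mathbb{R}^2$, and $t\mapsto\pospart{t}=\max\{t,0\}$ is convex on $\mathbb{R}$. Hence the composition $(\mu,\sigma)\mapsto\pospart{\mu+\sigma\xi}$ is convex on $\mathbb{R}^2$, and in particular on $\mathbb{R}\times\mathbb{R}_{\ge0}$, for every $\xi$. Then I would conclude with the standard fact that an average (integral against a nonnegative measure) of convex functions is convex: for $\lambda\in[0,1]$ and $(\mu_1,\sigma_1),(\mu_2,\sigma_2)\in\mathbb{R}\times\mathbb{R}_{\ge0}$, pointwise convexity in $\xi$ yields
\[
\pospart{\lambda(\mu_1+\sigma_1\xi)+(1-\lambda)(\mu_2+\sigma_2\xi)} \le \lambda\pospart{\mu_1+\sigma_1\xi}+(1-\lambda)\pospart{\mu_2+\sigma_2\xi},
\]
and applying $\E{\bm{\xi}\sim\nDist{0}{1}}{\cdot}$ to both sides (which preserves the inequality) gives $g(\lambda(\mu_1,\sigma_1)+(1-\lambda)(\mu_2,\sigma_2))\le\lambda\,g(\mu_1,\sigma_1)+(1-\lambda)\,g(\mu_2,\sigma_2)$, i.e.\ convexity of $g$ in $(\mu,\sigma)^T$.

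I do not expect a serious obstacle here; the only mild subtlety is the boundary $\sigma=0$, where the rectified Gaussian degenerates to a point mass at $\mu$ and $g(\mu,0)=\pospart{\mu}$. This is consistent with the integral representation above (the integrand simply no longer depends on $\xi$), so the convexity argument goes through verbatim on the closed half-plane $\mathbb{R}\times\mathbb{R}_{\ge0}$, and no differentiation of the closed form of $g$ is ever needed.
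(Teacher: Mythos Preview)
Your proof is correct and takes a genuinely different route from the paper. The paper differentiates the closed-form expression $g(\mu,\sigma)=\frac{\mu}{2}\bigl[1-\erf{\tfrac{-\mu}{\sigma\sqrt{2}}}\bigr]+\tfrac{\sigma}{\sqrt{2\pi}}e^{-\mu^2/2\sigma^2}$ and shows that the Hessian factors as a positive scalar times a rank-one outer product $cc^T$, hence is positive semidefinite. Your argument instead uses the reparametrization $\bz{}=\mu+\sigma\bm{\xi}$ to write $g$ as an expectation of the pointwise-convex functions $(\mu,\sigma)\mapsto\pospart{\mu+\sigma\xi}$, and then invokes closure of convexity under averaging. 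What your approach buys is simplicity and robustness: no derivatives are taken, the boundary $\sigma=0$ is handled uniformly, and the argument would extend verbatim to $\relu{}$ replaced by any convex function with at most linear growth. What the paper's approach buys is that the explicit second derivatives of $g$ are in hand, which are useful elsewhere (e.g., when building tight affine over-approximations of $g$ in Proposition~\ref{prop:neuron}); your proof establishes convexity but does not supply that derivative information.
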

\begin{proof}
    Let us denote the closed-form expression for the expected value of rectified Gaussian distributions as derived in \citep{harva2004hierarchical} by function  $g:\mathbb{R}^2\rightarrow\mathbb{R}$, that is, we have that, 
    \begin{equation}\label{eq: g}
    \E{\bz{}\sim\rectnDist{\mu}{\sigma^2}}{\bz{}} \coloneqq g(\mu,\sigma) = \frac{\mu}{2}\left[1-\erf{\frac{-\mu}{\sigma\sqrt{2}}} \right] + \frac{\sigma}{\sqrt{2\pi}}\Exp{-\frac{\mu^2}{2\sigma^2}}.
    \end{equation}
    Then, 
    \begin{align*}
    \frac{\partial^2g}{\partial\mu^2} &= \frac{1}{\sigma\sqrt{2\pi}}\Exp{-\frac{\mu^2}{2\sigma^2}} \nonumber\\
    \frac{\partial^2g}{\partial\sigma^2} &= \frac{\mu^2}{\sigma^3\sqrt{2\pi}}\Exp{-\frac{\mu^2}{2\sigma^2}} \nonumber\\
    \frac{\partial^2g}{\partial\sigma\partial\mu} &= -\frac{\mu}{\sigma^2\sqrt{2\pi}}\Exp{-\frac{\mu^2}{2\sigma^2}} \nonumber
    \end{align*}
    Therefore, the Hessian of $g$ w.r.t. $(\mu,\sigma)^T$, that is, 
    \begin{align*}
    H_{(\mu,\sigma)} &= \mat{cc}{\frac{\partial^2g}{\partial\mu^2} & \frac{\partial^2g}{\partial\sigma\partial\mu} \\
    \frac{\partial^2g}{\partial\sigma\partial\mu} & \frac{\partial^2g}{\partial\sigma^2}}
    \end{align*}
    can be written as
    $$H_{(\mu,\sigma)}= \frac{1}{\sigma\sqrt{2\pi}}\Exp{-\frac{\mu^2}{2\sigma^2}}cc^T$$
    where $c=(1, \frac{1}{\sigma})^T\in\mathbb{R}^2$. To determine whether $H_{(\mu,\sigma)}$ is semipositive definite (spd), that is, whether $\forall z\in\mathbb{R}^2, z^TH_{(\mu,\sigma)}z\geq 0$, first observe that $\forall (\mu,\sigma)\in\mathbb{R}\times\mathbb{R}_{\geq 0}$, we have that $\frac{1}{\sqrt{2\pi\sigma}}\Exp{-\frac{\mu^2}{2\sigma^2}}\geq 0$. Next, we observe that $cc^T$ is spd, because $cc^T$ is a symmetric matrix with all (leading) principal minors equal to zero. Hence, $H_{(\mu,\sigma)}$ is spd and, consequently, $g$ is convex w.r.t. $(\mu,\sigma)$.
\end{proof}

\begin{lemma}\label{lemma:convexityExpectRectLinCombNormal}
    Let $m:\mathbb{R}^n\rightarrow\mathbb{R}$ be a linear function defined as $m(x)\coloneqq\mu^T x$, with $\mu\in\mathbb{R}^n$ and $s:\mathbb{R}^n\rightarrow\mathbb{R}$ be a quadratic function $s(x)\coloneqq x^T\Sigma x$ with $\Sigma\in\mathbb{R}^{n\times n}$ a positive definite matrix, then it holds that $\E{\bm{z}\sim\rectnDist{m(x)}{s(x)}}{\bm{z}}$ is convex w.r.t. $x\in\mathbb{R}^n$.
\end{lemma}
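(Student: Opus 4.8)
The plan is to reduce Lemma~\ref{lemma:convexityExpectRectLinCombNormal} to the two-variable convexity statement already established in Lemma~\ref{lemma:convexityExpectRectNormalSigmaMu}, by exhibiting the map $x \mapsto \E{\bm{z}\sim\rectnDist{m(x)}{s(x)}}{\bm{z}}$ as a composition of the convex function $g(\mu,\sigma)$ with a suitable inner map, and then invoking the standard rule that a convex nondecreasing-in-the-right-directions composition stays convex. Concretely, by Eqn.~\eqref{eq: g} we have $\E{\bm{z}\sim\rectnDist{m(x)}{s(x)}}{\bm{z}} = g\bigl(m(x), \sqrt{s(x)}\bigr)$, so the object of interest is $G(x) \coloneqq g\bigl(\mu^T x,\ \sqrt{x^T\Sigma x}\,\bigr)$.

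First I would record the relevant monotonicity and convexity facts about the inner map $x \mapsto (\mu^T x,\ \sqrt{x^T\Sigma x})$. The first component $\mu^T x$ is affine. The second component $x \mapsto \sqrt{x^T \Sigma x}$ is a norm (since $\Sigma$ is positive definite, it equals $\lVert \Sigma^{1/2} x\rVert_2$), hence convex and nonnegative. Then I would note the monotonicity property of $g$ needed to push convexity through the composition: $g$ is nondecreasing in its second argument $\sigma$ on $\sigma \ge 0$ — indeed $\partial g/\partial\sigma = \tfrac{1}{\sqrt{2\pi}}\Exp{-\mu^2/2\sigma^2} \ge 0$ (a short computation from Eqn.~\eqref{eq: g}, or cite the rectified-Gaussian literature). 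This is the crucial extra ingredient: composing a jointly convex outer function with an affine-plus-convex inner map preserves convexity precisely when the outer function is monotone nondecreasing in the coordinate where the inner map is merely convex (not affine).

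With these pieces in place the argument is a standard composition lemma applied pointwise: fix $x_0, x_1 \in \mathbb{R}^n$ and $\lambda \in [0,1]$, write $x_\lambda = \lambda x_0 + (1-\lambda)x_1$. By convexity of $\sqrt{\cdot^T\Sigma\cdot}$, the pair $\bigl(\mu^T x_\lambda,\ \sqrt{x_\lambda^T\Sigma x_\lambda}\bigr)$ is coordinatewise $\le$ (in the second coordinate, with equality in the first) the convex combination $\lambda(\mu^T x_0, \sqrt{x_0^T\Sigma x_0}) + (1-\lambda)(\mu^T x_1, \sqrt{x_1^T\Sigma x_1})$; then monotonicity of $g$ in $\sigma$ gives $G(x_\lambda) \le g$ evaluated at that convex combination, and joint convexity of $g$ (Lemma~\ref{lemma:convexityExpectRectNormalSigmaMu}) bounds the latter by $\lambda G(x_0) + (1-\lambda) G(x_1)$. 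Chaining these inequalities yields $G(x_\lambda) \le \lambda G(x_0) + (1-\lambda)G(x_1)$, which is exactly convexity of $G$ on $\mathbb{R}^n$.

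I expect the main obstacle to be purely expository: making the monotonicity-in-$\sigma$ step airtight, since the composition rule only works because $g$ is increasing in $\sigma$ — one must verify $\partial g/\partial\sigma \ge 0$ (or argue it from first principles: $\E{}{\max(\bm z,0)}$ is increasing in the spread of the Gaussian at fixed mean) and be careful at the boundary $\sigma = 0$ where the $\sqrt{s(x)}$ map is nonsmooth but still continuous and convex. A secondary technical point is that $g$ as written in Eqn.~\eqref{eq: g} has a removable issue at $\sigma = 0$; one handles this either by noting $g$ extends continuously (with $g(\mu,0) = \pospart{\mu}$) and convexity passes to the closure, or by restricting to the relative interior and using continuity. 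Neither obstacle is deep, but both deserve an explicit sentence so the composition argument is clean.
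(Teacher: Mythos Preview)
Your proposal is correct and takes a genuinely different route from the paper. The paper does \emph{not} reduce to Lemma~\ref{lemma:convexityExpectRectNormalSigmaMu} via a composition rule; instead it computes the Hessian of $x\mapsto g(m(x),\sqrt{s(x)})$ directly in the $x$-variable, obtains the representation $H_x = \tfrac{1}{\sqrt{2\pi s(x)}}\bigl(cc^T + \Sigma - dd^T\bigr)$ with $c=\mu-\tfrac{m(x)}{s(x)}\Sigma x$ and $d=\tfrac{1}{\sqrt{s(x)}}\Sigma x$, and then argues that $\Sigma - dd^T$ is positive semidefinite by a matrix-product trick. Your argument is cleaner and more modular: you observe that $x\mapsto\sqrt{x^T\Sigma x}=\lVert\Sigma^{1/2}x\rVert_2$ is a norm (hence convex), that $\partial g/\partial\sigma=\tfrac{1}{\sqrt{2\pi}}\exp(-\mu^2/2\sigma^2)\ge 0$, and then invoke the standard composition rule (convex outer, nondecreasing in the convex coordinate, affine in the other). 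This reuses Lemma~\ref{lemma:convexityExpectRectNormalSigmaMu} instead of redoing a second Hessian calculation, and it sidesteps the somewhat delicate matrix algebra in the paper's proof. The paper's approach has the mild advantage of yielding an explicit Hessian formula, but that formula is not used anywhere else, so your route loses nothing. Your caveats about the boundary $\sigma=0$ (continuous extension $g(\mu,0)=\pospart{\mu}$, nonsmoothness of the norm at the origin) are well placed and easily dispatched as you indicate.
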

\begin{proof}
    Define $g:\mathbb{R}^n\rightarrow\mathbb{R}$ as in Eqn. \eqref{eq: g}. Then, the Hessian of $g$ w.r.t. $x$, that is,
    $H_x=\frac{\partial^2 g}{\partial x^2}$,
    can be written as
    $$H_x = \frac{1}{\sqrt{2\pi s(x)}}(cc^T + \Sigma - dd^T),$$
    where $c = \mu - \frac{m(x)}{s(x)}\Sigma x$ and $d = \frac{1}{\sqrt{s(x)}}\Sigma$. To prove the convexity of $g$ w.r.t. $x$, it suffices to show that matrix $H_x$ is spd for all $x\in \mathbb{R}^n$. We can conclude directly that $\frac{1}{\sqrt{2\pi s(x)}}>0$ and $cc^T$ is a spd matrix, since for any $c\in \mathbb{R}^n$, $cc^T$ is a symmetric matrix with all (leading) principal minors equal to zero. Next, we observe that
    $$xx^T (\Sigma - dd^T) = xx^T \left(\Sigma - \frac{\Sigma xx^T\Sigma}{x^T\Sigma x}\right) = 0_{n,n}$$
    where $0_{n,n}\in\mathbb{R}^{n\times n}$ is the zero matrix. Notice that both matrices $xx^T$ and $0_{n,n}$ are spd. Then, since that the product of two symmetric psd matrices is psd iff its product is symmetric, we conclude that matrix $\Sigma - dd^T$ is psd. Then, since the sum of spd matrices is spd, $H_x$ is spd and consequently $g$ is convex w.r.t. $x$.
\end{proof}
\subsection{Proof Proposition \ref{prop:neuron}}\label{subsec:ProofNeuron}
Without any loss of generality, we assume that $m:\mathbb{R}^n\rightarrow\mathbb{R}$ is a linear function defined as $m(z)=\mu^T z$, with $\mu\in\mathbb{R}^n$ and $s:\mathbb{R}^n\rightarrow\mathbb{R}$ is a quadratic function $s(z)=z^T\Sigma z$, with $\Sigma\in\mathbb{R}^{n\times n}$ being a positive definite matrix, and define the function $r:\mathbb{R}^n\rightarrow \mathbb{R}$ as $r(z)=\sqrt{s(z)}$.

The expectation of a rectified Gaussian distributed variable is given by function $g: \mathbb{R}^{2}\rightarrow \mathbb{R}^{1}$ as defined in Eqn. \eqref{eq: g}, such that
\begin{align*}
    \E{\bprez{}\sim \rectnDist{m(z)}{s(z)}}{\bprez{}} = g(m(z), r(z)).
\end{align*}
Since $g$ is convex w.r.t. $z$ as shown in Lemma \ref{lemma:convexityExpectRectLinCombNormal}, for $z\in Z$, $g$ can be lower bounded by its tangent at some point $z^*\in Z$, that is, for $\vecCoefL\in \mathbb{R}^n$ and $\biasL\in\mathbb{R}$ defined as 
\begin{align*}
    \vecCoefL &= [\nabla_{z} g(m(z^*), r(z))]_{x=z^*}, \qquad \biasL = g(m(z^*), r(z^*))  - \vecCoefL^T z^*,
\end{align*}
where $\nabla_{z}$ denotes the gradient of $g$ w.r.t.$z$, it holds that $g(m(z), r(z))\geq \vecCoefL^T x + \biasL$. Here, the gradient of $g$ w.r.t. $z$ in terms of $m(z)$ and $r(z)$ is given by
$$
\nabla_{z} g(m(z), r(z)) = \frac{1}{2}\left[1-\erf{\frac{-m(z)}{r(z)\sqrt{2}}} \right]\mu + \frac{\Sigma z}{r(z)\sqrt{2\pi}}\Exp{-\left(\frac{m(z)}{r(z)\sqrt{2}}\right)^2}.
$$
By the convexity of $g$ w.r.t. $z$, we could upper bound $g$ by finding its maximum, located at on the boundary of $Z$, and fitting a hyperplane through the maximum and $n-2$ other points on the edge of $Z$. However, due to the potentially high dimensionality of the $Z$, this direct procedure is infeasible in practice. 

Instead, we first find affine relaxations $\check{g}, \hat{g}$ of $g$ w.r.t. $(m(z),r(z))\in \{(m(z),r(z))\mid \forall z\in Z\}$. After that, we use symbolic arithmetic to propagate affine relaxations $(m(z), \check{r}), (m(z), \hat{r})$ of $(m(z), r(z))$ w.r.t. $z\in Z$, through the symbolic interval $[\check{g}, \hat{g}]$ to obtain an affine relaxation of $g$ w.r.t. $z\in Z$, that is
$$[\cdot, \vecCoefU x + \biasU] =(m^{(i)},\hat{s}^{(i)})^T \otimes [\check{g},\hat{g}].$$
This completes the proof of the proposition. In the remainder, we explain how in practice $\check{g}$, $\hat{g}$ and $\hat{r}$ can be found. 

We denote the set of possible $(m(z), r(z))$ as $P\subset\mathbb{R}\times \mathbb{R}_{\geq}$, that is $P\coloneqq \{(m(z),r(z))\mid \forall z\in Z\}$. To find an affine relaxation of $g$ w.r.t. $(m(z), r(z))\in P$, we use the result of Lemma \ref{lemma:convexityExpectRectLinCombNormal}, that states that $g$ is convex w.r.t. $(\mu,\sigma)$, and for $\check{g}$ take the tangent of $g$ at some point in $P$ and compute $\hat{g}$ by fitting a hyper-plane through the largest $3$ points on the edge of a convex over-approximation of $P$.

To find $\hat{r}$ such that $r(z)\leq \hat{r}(z)$, $\forall z\in Z$, we use symbolic arithmetic to propagate an affine relaxation $\check{s},\hat{s}$ of $s$ w.r.t. $z$ through an affine relaxation $\check{r}_s, \hat{r}_s$ of $r$ w.r.t. $s$. Notice that, since the square root is a strictly increasing function, such that $\hat{r}(z) = \hat{r}_s(\hat{s}(z))$, we only require $\hat{s}$ and $\hat{r}_s$ to find $\hat{r}_s$. As $r$ is concave w.r.t. $s$, we take $\hat{r}_s$ the tangent of $r$ at $s(z^*)$ with $z^*\in X$, that is $\hat{r}_s(z) \leq \frac{s(z)}{2\sqrt{s(z^*)}} + \frac{1}{2}\sqrt{s(z^*)}$. In the case that $\Sigma$ is a diagonal matrix, finding $\hat{s}$ boils down to bounding $n$ one-dimensional quadratic functions. In the case that $\Sigma$ has non-diagonal terms, we first find a transformation matrix $T\in\mathbb{R}^{n\times n}$, such that $T^T\Sigma T$ becomes a diagonal matrix. Then, we bound $s$ in the transformed space induced by $T$, and transform the bounds back to the original space using the inverse transformation to obtain $\hat{s}$. 

\subsection{Proof Lemma \ref{lemma:closedFormProbRect}}\label{sec:ProofClosedFormProbRect}
    Due to random variable ($\bprez{}$) being independent for each dimension, the computation of the probability that $\bprez{}$ is in  hyper-rectangle $[\check{\prez{}}_k, \hat{\prez{}}_k]$ can be split over the dimension of $\bprez{}$:
    $$
    \Prob{\bprez{}\sim\nDist{m_k(z)}{\diag{s_k(z)}}}{\bprez{}\in [\check{\prez{}}_k, \hat{\prez{}}_k]} =\prod_{i\in \{1,\hdots, n\}} \Prob{\bprezElem{}{i}\sim\nDist{m_k^{(i)}(z)}{s_k^{(i)}(z)}}{\bprezElem{}{i}\in [\check{\prez{}}_k^{(i)}, \hat{\prez{}}_k^{(i)}]}
    $$
    where $\forall i\in \{1,\hdots, n\}$
    \begin{align*}
    & \Prob{\bprezElem{}{i}\sim\nDist{m_k^{(i)}(z)}{s_k^{(i)}(z)}}{\bprezElem{}{i}\in [\check{\prez{}}_k^{(i)}, \hat{\prez{}}_k^{(i)}]} =\\ 
    &\qquad \qquad \frac{1}{2}\left[\erf{\frac{\hat{\prez{}}_k^{(i)}-m_k^{(i)}(z)}{\sqrt{2s_k^{(i)}(z)}}} - \erf{\frac{\check{\prez{}}_k^{(i)}-m_k^{(i)}(z)}{\sqrt{2s_k^{(i)}(z)}}} \right].
    \end{align*}
    Using $\lim_{x\rightarrow\infty}\erf{x}=1$ and $\lim_{x\rightarrow\infty}\erf{-x}=-1$, the above result can be extended to unbounded intervals as follows
    \begin{align*}
    \Prob{\bprezElem{}{i}\sim\nDist{m_k^{(i)}(z)}{s_k^{(i)}(z)}}{\bprezElem{}{i}\in [\check{\prez{}}_k^{(i)}, \infty)} =\frac{1}{2}\left[1 - \erf{\frac{\check{\prez{}}_k^{(i)}-m_k^{(i)}(z)}{\sqrt{2s_k^{(i)}(z)}}} \right].
    \end{align*}

\subsection{Proof Proposition \ref{prop:nnCondExpect}}\label{sec:ProofnnCondExpect}
    Since the distribution of $\bprez{}$ has a diagonal covariance matrix and $[\check{\prez{}}_k, \hat{\prez{}}_k]$ is a hyper-rectangle, we have that
    \begin{align*}
        \nncondE{}{\act{k}{\bprez{}}}{\bprez{}\in [\check{\prez{}}_k, \hat{\prez{}}_k]}&=\condE{}{\act{k}{\bprez{}}}{\bprez{}\in [\check{\prez{}}_k, \hat{\prez{}}_k]}\Prob{}{\bprez{}\in [\check{\prez{}}_k, \hat{\prez{}}_k]} \\
        &=\int_{\check{\prez{}}_k}^{\hat{\prez{}}_k} \frac{\act{k}{z} \nProbDens{z}{m_{k}}{\diag{s_{k}}}}{\Prob{}{\bprez{}\in [\check{\prez{}}_k, \hat{\prez{}}_k]}} dz \Prob{}{\bprez{}\in [\check{\prez{}}_k, \hat{\prez{}}_k]} \\
        &=\int_{\check{\prez{}}_k}^{\hat{\prez{}}_k} \act{k}{z} \nProbDens{z}{m_{k}}{\diag{s_{k}}}dz\\
        &=\left(\int_{\check{\prez{}}^{(1)}}^{\hat{\prez{}}^{(1)}} \act{k}{z} \nProbDens{z}{m_{k}^{(1)}}{s_{k}^{(1)}}dz, \hdots, \int_{\check{\prez{}}^{(n_k)}}^{\hat{\prez{}}^{(n_k)}} \act{k}{z} \nProbDens{z}{m_{k}^{(n_k)}}{s_{k}^{(n_k)}}dz\right)^T
    \end{align*}
    where we ignore the dependence of $m_{k}$ and $s_{k}$ on $z$ to simplify the notation.
    For $i\in \{1,\hdots, n_k\}$, the integral can be split in two parts:
    \begin{align*}
        \int_{\check{\prez{}}^{(i)}}^{\hat{\prez{}}^{(i)}} \act{k}{z} \nProbDens{z}{m_{k}^{(i)}}{s_{k}^{(i)}}dz &= \int_{\check{\prez{}}^{(i)}}^{\infty} \act{k}{z} \nProbDens{z}{m_{k}^{(i)}}{s_{k}^{(i)}}dz - \int_{\hat{\prez{}}^{(i)}}^{\infty} \act{k}{z} \nProbDens{z}{m_{k}^{(i)}}{s_{k}^{(i)}}dz.
    \end{align*}
    For $\phi_k=I$, using the substitution rule for integration, the former relation can be written as
    \begin{align*}
        &\int_{\check{\prez{}}^{(i)}}^{\hat{\prez{}}^{(i)}} z \nProbDens{z}{m_k^{(i)}}{s_k^{(i)}}dz  = \int_{0}^{\infty} (z+\check{\prez{}}^{(i)}) \nProbDens{z+\check{\prez{}}^{(i)}}{m_k^{(i)}}{s_k^{(i)}}dz - \\
        &\qquad \int_{0}^{\infty} (z+\hat{\prez{}}^{(i)}) \nProbDens{z+\hat{\prez{}}^{(i)}}{m_k^{(i)}}{s_k^{(i)}}dz,
    \end{align*}
    which can be rewritten in terms of expectations and probabilities to obtain the final relation for $\phi_k=I$:
    \begin{align*}
            &\condE{}{\bprezElem{}{i}}{\bprezElem{}{i}\in [\check{\prez{}}^{(i)}, \hat{\prez{}}^{(i)}]}\Prob{}{\bprezElem{}{i}\in [\check{\prez{}}^{(i)}, \hat{\prez{}}^{(i)}]} = \\
            & \qquad\qquad \E{\bz{}\sim\nDist{m_k^{(i)}}{s_k^{(i)}}}{\relu{\bz{}+\check{\prez{}}^{(i)}}} - \E{\bz{}\sim\nDist{m_k^{(i)}}{s_k^{(i)}}}{\relu{\bz{}+\hat{\prez{}}^{(i)}}} + \\
            & \qquad\qquad \check{\prez{}}^{(i)}\Prob{\bz{}\sim\nDist{m_k^{(i)}}{s_k^{(i)}}}{\bz{}\in[\check{\prez{}}^{(i)},\infty]} -  \hat{\prez{}}^{(i)}\Prob{\bz{}\sim\nDist{m_k^{(i)}}{s_k^{(i)}}}{\bz{}\in[\hat{\prez{}}^{(i)},\infty]}.
    \end{align*}
    The relation for $\phi_k=\text{ReLU}$ follows directly from the result for $\phi_k = I$, since $\forall i\in \{1,\hdots, n_k\}$ it hods that
    \begin{align*}
        \int_{\check{\prez{}}^{(i)}}^{\hat{\prez{}}^{(i)}} \relu{z} \nProbDens{z}{m_k^{(i)}}{s_k^{(i)}}dz = \int_{\pospart{\check{\prez{}}}^{(i)}}^{\pospart{\hat{\prez{}}}^{(i)}} z \nProbDens{z}{m_k^{(i)}}{s_k^{(i)}}dz.
    \end{align*}


\subsection{Proof Proposition \ref{prop:boundOuterSpace}}\label{sec:ProofBoundOuterSpace}
    Recall from the proof of Proposition \ref{prop:nnCondExpect} that, $\forall i \in \{1,\hdots, n_k\}$, we can write the product of the conditional expectation and probability as
    $$\condE{}{\bprezElem{}{i}}{\bprezElem{}{i}\in [\check{\prez{}}^{(i)},\infty)}\Prob{}{\bprezElem{}{i}\in [\check{\prez{}}^{(i)},\infty)}
        = \int_{\check{\prez{}}^{(i)}}^{\infty}\prez{} \nProbDens{\prez{}}{m_k^{(i)}(z)}{s_k^{(i)}(z)}d\prez{},$$
    We can apply substitution $q=\frac{\prez{}-m^{(i)}_{k}(z)}{\sqrt{2s_k^{(i)}(z)}}$ to rewrite the integral as
    \begin{align*}
        &\int_{\check{\prez{}}^{(i)}}^{\infty}\prez{} \nProbDens{\prez{}}{m_k^{(i)}(z)}{s_k^{(i)}(z)}d\prez{} = 
        \frac{1}{\sqrt{\pi}}\int_{\Tilde{\prez{}}^{(i)}}^{\infty}(q\sqrt{2s_k^{(i)}(z)}+m_k^{(i)}(z))\Exp{-q^2}dq,
    \end{align*}
    where $\Tilde{\prez{}}^{(i)}(z)=\frac{\check{\prez{}}^{(i)}-m^{(i)}_{k}(z)}{\sqrt{2s_k^{(i)}(z)}}$. 
    We then solve the integral to obtain
    \begin{equation}\label{eq:AnlFormOuterBoundCondExpect}
        \begin{aligned}
            &\int_{\check{\prez{}}^{(i)}}^{\infty}\prez{} \nProbDens{\prez{}}{m_k^{(i)}(z)}{s_k^{(i)}(z)} d\prez{}
            = \frac{m_k^{(i)}(z)}{2}(1 - \erf{\Tilde{\prez{}}^{(i)}(z)}) + \sqrt{\frac{s_k^{(i)}(z)}{2\pi}} \Exp{-(\Tilde{{\prez{}}}^{(i)}(z))^2}.
        \end{aligned}
    \end{equation}
    \new{
    The above result naturally extends to the case of relu-activation functions, which results in 
    \begin{equation}\label{eq:AnlFormOuterBoundCondExpectReLU}
    \begin{aligned}
        &\int_{\check{\prez{}}^{(i)}}^{\infty} \relu{\prez{}} \nProbDens{\prez{}}{m_k^{(i)}(z)}{s_k^{(i)}(z)}d\prez{} = 
        \frac{m_k^{(i)}(z)}{2}(1 - \erf{\pospart{\Tilde{\prez{}}^{(i)}(z)}}) + \sqrt{\frac{s_k^{(i)}(z)}{2\pi}} 
        \Exp{-\pospart{\Tilde{\prez{}}^{(i)}(z)}^2}.
    \end{aligned}
    \end{equation}
    Since, $\forall x\in\mathbb{R}, 0\leq\erf{\pospart{x}}\leq 1$ and $0\leq \Exp{-x^2}\leq 1$, the closed form solution for the integral can be bounded as follows
    \begin{equation*}
        \begin{aligned}
            \frac{1}{2}\negpart{m_k^{(i)}(z)} \leq &\int_{\check{\prez{}}^{(i)}}^{\infty}\relu{\prez{}}\nProbDens{\prez{}}{m_k^{(i)}(z)}{s_k^{(i)}(z)} d\prez{} \leq \frac{1}{2}\pospart{m_k^{(i)}(z)} + \sqrt{\frac{s_k^{(i)}(z)}{2\pi}}, \qquad \forall z\in\mathbb{R}^n,
        \end{aligned}
    \end{equation*}
    where, $\sqrt{s_k^{(i)}(z)}$ is convex w.r.t. $z$. Hence, the above upper bound can easily be transformed into a piece-wise affine upper bound.
    }

\subsection{Proof Proposition \ref{prop:decisionLayer} and Corollary \ref{corol:advRobustnessClass}}
    Let us define function $c:\mathbb{R}^{n_{K+1}}\rightarrow\mathbb{R}$ to simplify notation. By the law of total expectation it holds that
    $$\E{}{c(\bprez{K+1})} = \sum_{j\in\{1,\hdots,N\}} \condE{}{c(\bprez{K+1})}{\bprez{K+1}\in Z_j} \Prob{}{\bprez{K+1}\in Z_j} 
    $$
    where $\bprez{K+1}=\bnn{x}$ and $\bm{w}\sim q(\cdot)$. 
    Here, the conditional expectations can be lower- and upper-bounded by substituting the distribution of random variable $\bprez{K+1}$ by a dirac-delta function placed at the min- and max value of $h^{(i)}$ over $Z_j$, respectively, that is,
    $$
    \E{}{c(\bprez{K+1})} \geq \sum_{j\in\{1,\hdots,N\}} [\min_{\prez{}\in Z_j} c(\prez{})] \Prob{}{\bprez{K+1}\in Z_j} 
    $$
    and 
    $$\E{}{c(\bprez{K+1})} \leq \sum_{j\in\{1,\hdots,N\}} [\max_{\prez{} \in Z_j} c(\prez{})] \Prob{}{\bprez{K+1}\in Z_j}.
    $$
    Hence, if we take $c(\prez{}) = \softmax^{(j)}(\prez{}) - \softmax^{(i)}(\prez{})$, for which holds that
    $$-1\leq \max_{\prez{}\in\mathbb{R}^m}(\softmax^{(j)}(\prez{}) - \softmax^{(i)}(\prez{})) \leq 1,$$
    we have that
    $$
    \E{}{c(\bprez{K+1})} \geq -\Prob{}{\bprez{K+1}\in Z_1} + \sum_{j\in\{2,\hdots,N\}} [\min_{\prez{}\in Z_l} c(\prez{})] \Prob{}{\bprez{K+1}\in Z_j} 
    $$
    and 
    \begin{equation}\label{eq:upBoundClasCond}
    \E{}{c(\bprez{K+1})} \leq \Prob{}{\bprez{K+1}\in Z_1} + \sum_{l\in\{2,\hdots,N\}} [\max_{\prez{}\in Z_j} c(\prez{})] \Prob{}{\bprez{K+1}\in Z_j} 
    \end{equation}
    Then, in the special case of $N=2$, Equation \eqref{eq:upBoundClasCond} provides a sufficient condition to conclude on adversarial robustness. In particular,
    we obtain that
    \begin{align*}
    \max_{\prez{}\in Z_1}\left(\frac{\exp{\prez{}^{(j)}} - \exp{\prez{}^{(i)}}}{\sum_{l\in\{1,\hdots,n_{K+1}\}}\exp(\prez{}^{(l)})}\right) \leq -\frac{\Prob{}{\bprez{K+1}\in Z_0}}{\Prob{}{\bprez{K+1}\in Z_1}}  \implies \E{}{c(\bprez{K+1})} \leq 0.
    \end{align*}
    where we used the definition of the $\softmax$ functions. We can rewrite the former as follows
    \begin{align}\label{eq:proofDecisionLayerCondMiddleStep}
    \max_{\prez{}\in Z_1}\left(\frac{\exp{\prez{}^{(j)}} - \exp{\prez{}^{(i)}}}{\sum_{l\in\{1,\hdots,n_{K+1}\}}\exp(\prez{}^{(l)})}\right) \leq -\eta \implies \E{}{c(\bprez{K+1})} \leq 0.
    \end{align}
    where $\eta \in \mathbb{R}_{\geq 0}$ is defined as $\eta\coloneqq \frac{\Prob{}{\bprez{K+1}\in Z_0}}{\Prob{}{\bprez{K+1}\in Z_1}}{} = \frac{1-\Prob{}{\bprez{K+1}\in Z_1}}{\Prob{}{\bprez{K+1}\in Z_1}}$. Notice that since, $\supp{\bnn{x}} = \mathbb{R}^{n_k}$ and hence $\supp{\bprez{K+1}} = \mathbb{R}^{n_k}$, it is guaranteed that $\Prob{}{\bprez{K+1}\in Z_1} \neq 0$.    
    Since the maximization problem in Eqn.~\eqref{eq:proofDecisionLayerCondMiddleStep} is highly non-convex it can only be solved via exhaustive enumeration as discussed in \citep{berrada2021make}. Hence, to improve computational efficiency, we use that
    $$
    \max_{\prez{}\in Z_1}\left(\frac{\exp{\prez{}^{(j)}} - \exp{\prez{}^{(i)}}}{\sum_{l\in\{1,\hdots,n_{K+1}\}}\exp(\prez{}^{(l)})}\right) \leq \frac{\max_{\prez{}\in Z_1}(\exp{\prez{}^{(j)}} - \exp{\prez{}^{(i)}})}{\max_{\prez{}\in Z_1}(\sum_{l\in\{1,\hdots,m\}}\exp(\prez{}^{(l)}))},
    $$
    to obtain the following condition
    \begin{align*}
    \max_{\prez{}\in Z_1}(\exp{\prez{}^{(j)}} - \exp{\prez{}^{(i)}}) \leq 
     -\eta \max_{\prez{}\in Z_1}\left(\sum_{l\in\{1,\hdots,n_{K+1}\}}\exp(\prez{}^{(l)})\right) \implies \E{}{c(\bprez{K+1})} \leq 0,
    \end{align*}
    which, in the case that $Z_1$ is a hyper-rectangle defined by vectors $\check{\prez{}}, \hat{\prez{}}\in\mathbb{R}^{n_{K+1}}$, reduces to
    \begin{align*}
    \exp{\hat{\prez{}}^{(j)}} - \exp{\check{\prez{}}^{(i)}} +
     \eta \sum_{l\in\{1,\hdots,n_{K+1}\}}\exp(\hat{\prez{}}^{(l)}) \leq 0 \implies \E{}{c(\bprez{K+1})} \leq 0.
    \end{align*}


\section{Proofs Section \ref{sec:Algo}}
\subsection{Proof Proposition \ref{prop:FindRectForProbBounds}}
    Recall that for $[\check{\prez{}},\hat{\prez{}}]$ a hyper-rectangle, according to Lemma \ref{lemma:closedFormProbRect}, computing the probability that $\bprez{}$ is in $Z_k$ reduces to computing the product of Gaussian CDFs (error functions), that is
    \begin{align*}
        \Prob{\bprez{}\sim \nDist{m_k(x)}{\diag{s_k(x)}}}{\bprez{}\in [\check{\prez{}}_k,\hat{\prez{}}_k]} =\prod_{i\in\{1,\hdots,n_k\}} \frac{1}{2}\left[\erf{\frac{\hat{\prez{}}^{(i)}-m_k^{(i)}(x)}{\sqrt{2s_k^{(i)}(x)}}} - \erf{\frac{\check{\prez{}}^{(i)}-m_k^{(i)}(x)}{\sqrt{2s_k^{(i)}(x)}}} \right].
    \end{align*}
    To ensure that $\Prob{\bprez{}\sim \nDist{m(x)}{\diag{s(x)}}}{\bprez{}\in Z_k}= 1-\epsilon$, we enforce that for each $i\in \{1,\hdots,l\}$, it holds that
    $$\frac{1}{2}\left[\erf{\frac{\hat{\prez{}}^{(i)}-m_k^{(i)}(x)}{\sqrt{2s_k^{(i)}(x)}}} - \erf{\frac{\check{\prez{}}^{(i)}-m_k^{(i)}(x)}{\sqrt{2s_k^{(i)}(x)}}} \right] = (1-\epsilon)^{\frac{1}{n_k}}.$$
    Next, we choose to place $\hat{\prez{}}$ and $\check{\prez{}}$ symmetrical around $m(x)$, that is, $\forall i \in\{1,\hdots, n_k\}$, we choose $\hat{\prez{}}^{(i)}$ and $\check{\prez{}}^{(i)}$ such that
    $$\erf{\frac{\hat{\prez{}}^{(i)}-m_k^{(i)}(x)}{\sqrt{2s_k^{(i)}(x)}}} = (1-\epsilon)^{\frac{1}{n_k}}, \quad \text{and} \quad \erf{\frac{\check{\prez{}}^{(i)}-m_k^{(i)}(x)}{\sqrt{2s_k^{(i)}(x)}}} = -(1-\epsilon)^{\frac{1}{n_k}}.$$
    Then, by taking the inverse of the error function for both qualities, we obtain 
    the following expressions for $\check{\prez{}}$ and $\hat{\prez{}}$:
    \begin{align*}
        \hat{\prez{}}^{(i)} &= \inverf{(1-\epsilon)^{\frac{1}{n}}}\sqrt{2s_k^{(i)}(x)}+m_k^{(i)}(x),\\
        \check{\prez{}}^{(i)} &= \inverf{-(1-\epsilon)^{\frac{1}{n}}}\sqrt{2s_k^{(i)}(x)}+m_k^{(i)}(x).
    \end{align*}
    Notice that, these expressions depend via $s(x)$ and $m(x)$ on $x$ which can take values in $X$. Hence, to ensure that $\forall x\in T$, $\Prob{\bprez{}\sim \nDist{m(x)}{\diag{s(x)}}}{\bprez{}\in Z_k} \geq 1-\epsilon$, it should hold that
    \begin{align*}
        \hat{\prez{}}^{(i)} &\geq \min_{x\in X} \left[\inverf{(1-\epsilon)^{\frac{1}{n}}}\sqrt{2s_k^{(i)}(x)}+m_k^{(i)}(x) \right],\\
        \check{\prez{}}^{(i)} &\leq \max_{x\in X} \left[\inverf{-(1-\epsilon)^{\frac{1}{n}}}\sqrt{2s_k^{(i)}(x)}+m_k^{(i)}(x) \right].
    \end{align*}
    Hence, the optimal choice for $\check{\prez{}}^{(i)}$ and $\check{\prez{}}^{(i)}$ is such that above constraints hold by equality. Notice that for $\epsilon \in [0,1]$, both optimization problems reduce to convex \new{minimization} problems.

\section{Algorithms}\label{appen:Algorithms}
The interval relaxation procedure of the softmax layer is summarized in Algorithm \ref{al:softmaxIBP}. The algorithm employs the result of Proposition \ref{prop:decisionLayer}. The back-propagation of a PWA relaxation of the value function is summarized in Algorithm \ref{al:backpropstep}, to which we refer as \texttt{BP}. For partition $\{Z_{k,j}\}_{j=1}^N$ of $\supp{f^{\bm{w}}_{0:k}}$ and a compact subset $Z_{k-1}$ of $\supp{f^{\bm{w}}_{0:k-1}}$, \texttt{BP} computes the relaxations of Terms \ref{eq:condLBValFunc}a in Line 4 and \ref{eq:condLBValFunc}b in Line 6 w.r.t. $z_{k-1}\in Z_{k-1}$. It employs Lemma \ref{lemma:closedFormProbRect} for Term \ref{eq:condLBValFunc}a, and  Proposition \ref{prop:nnCondExpect} or \ref{prop:boundOuterSpace} for Term \ref{eq:condLBValFunc}b. Finally, the relaxations of Terms \ref{eq:condLBValFunc}a and \ref{eq:condLBValFunc}b are combined with PWA relaxation of $\val{k}{}$ w.r.t. partition $\{Z_{k,j}\}_{j=1}^N$, denoted as $\{\check{V}_{k,j},\hat{V}_{k,j}\}_{j=1}^N$, following Eqn.~\eqref{eq:condLBValFunc} in Line 8-9. Lastly, Algorithm \ref{al:regression} presents BNN-DP for the regression setting. 

\begin{algorithm}
\caption{Interval relaxation procedure for Eqn. \ref{Eqn:FinalConditionValueIteration} with $h=\softmax$}\label{al:softmaxIBP}
\begin{algorithmic}[1]
\FUNCTION{IBPSoftmax($\{Z_{j}\}_{j=1}^N$)}
    \FOR{$j\in\{1,\hdots,N\}$}
        \IF{$Z_j$ is a hyperrectangle}
            \STATE Initialize $\check{z},\hat{z}$ such that  $Z_j=[\check{z}^{(0)},\hat{z}^{(0)}]\times\hdots\times[\check{z}^{(n)},\hat{z}^{(n)}]$
            \FOR{$i\in\{1,\hdots,n\}$}
                \STATE $\biasL_j^{(i)} = \frac{\exp{\check{z}^{(i)}}}{\exp{\check{z}^{(i)}}+\sum_{l=1, l\neq i}^N\exp{\hat{z}^{(l)}}}$,
                \STATE 
                $\biasU_j^{(i)} = \frac{\exp{\hat{z}^{(i)}}}{\exp{\hat{z}^{(i)}}+\sum_{l=1, l\neq j}^N \exp{\check{z}^{(l)}}}$
            \ENDFOR
        \ELSE
            \STATE $\biasL_j=0, \biasU_j=1$
        \ENDIF
    \ENDFOR
    \STATE \textbf{Return:} $\{[\biasL_j, \biasU_j]\}_{j=1}^N$
\ENDFUNCTION
\end{algorithmic}
\end{algorithm}

\begin{algorithm}[h]
\caption{Back-Propagation of PWA Relaxations.}
\label{al:backpropstep}
\begin{algorithmic}[1]
    \FUNCTION{\texttt{BP}($\{[\check{V}_{k,j}, \hat{V}_{k,j}]\}_{j=1}^N$, $\{Z_{k,j}\}_{j=1}^N$, $Z_{k-1}$)}
    \FOR{$j\in \{1,\hdots,N\}$}
        \STATE For $\z{}\in Z_{k,j}$, compute $\check{\mathbb{P}}_j, \hat{\mathbb{P}}_j\in [0,1]$ s.t.
        \STATE $\Prob{\bprez{}\sim\nDist{m_{k}(z)}{s_{k}(z)}}{\bprez{}\in Z_{k,j}}\in [\check{\mathbb{P}}_j, \hat{\mathbb{P}}_j]$, \label{all:backpropstep_termA}
        \STATE compute $\check{\mathbb{E}}_j, \hat{\mathbb{E}}_j:\mathbb{R}^{n_{k-1}}\rightarrow\mathbb{R}^l$ s.t.
        \STATE $\nncondE{\bprez{}\sim\nDist{m_{k}(z)}{s_{k}(z)}}{\act{k}{\bprez{}}}{\bprez{}\in Z_{k,j}} \in [\check{\mathbb{E}}_j, \hat{\mathbb{E}}_j]$ \label{all:backpropstep_termB}
    \ENDFOR
    \STATE $[\check{V}_{k-1}, \cdot] = \sum_{j=1}^N \matCoefL_{k,j} \otimes [\check{\mathbb{E}}_j, \hat{\mathbb{E}}_j] + \vecBiasL_{k,j} \otimes [\check{\mathbb{P}}_{j}, \hat{\mathbb{P}}_{j}] $ \label{all:backpropstep_CombineL}
    \STATE $[\cdot, \hat{V}_{k-1}] = \sum_{j=1}^N \matCoefU_{k,j} \otimes [\check{\mathbb{E}}_j, \hat{\mathbb{E}}_j] + \vecBiasU_{k,j} \otimes [\check{\mathbb{P}}_{j}, \hat{\mathbb{P}}_{j}] $ \label{all:backpropstep_CombineU}
    \ENDFUNCTION
\end{algorithmic}
\end{algorithm}

\begin{algorithm}[h]
\caption{Adversarial Robustness for Regression}\label{al:regression}
\begin{algorithmic}[1]
\FUNCTION{Regression($T$, $\{N_k\}_{k=1}^{K+1}$)}
    \FOR{$k\in \{1, \hdots, K-1\}$} 
        \STATE $Z_{k,main} = [\check{\prez{k}}, \hat{\prez{k}}]$ \texttt{(Prop \ref{prop:FindRectForProbBounds})}
        \STATE $\{Z_{k,j}\}_{j=1}^{N_k-1}=\texttt{REFINE}(Z_{k,main})$
        \STATE $\mathcal{Z}_k = \{Z_{k,j}\}_{j=1}^{N_k-1}  \bigcup Z^C_{k,main}$
    \ENDFOR
    \STATE $\check{V}_{K}(z),\hat{V}_{K}(z) = m_K(z)$,
    \STATE $\mathcal{Z}_K=\{\mathbb{R}^{n_K}\}$
    \FOR{$k\in\{K,\hdots,1\}$, \textbf{for} $l\in\{1,\hdots,N_k\}$}
        \STATE $[\check{V}_{k-1,l}, \hat{V}_{k-1,l}] =$ \texttt{BP}($\{[\check{V}_{k,j}, \hat{V}_{k,j}]\}_{j=1}^{N_{k}}$, $\mathcal{Z}_k$, $\mathcal{Z}_{k-1}^{(l)}$)
    \ENDFOR
    \STATE \textbf{Return:} $\min_{x\in T} \check{V}_{0}(x)$, $\max_{x\in T} \hat{V}_{0}(x)$
\ENDFUNCTION
\end{algorithmic}
\end{algorithm}



\end{document}